\theoremstyle{thmstyleone}%
\newtheorem{theorem}{Theorem}
\theoremstyle{thmstyletwo}%
\theoremstyle{thmstylethree}%
\newcounter{gaocomm}  
\definecolor{blue-violet}{rgb}{0.00,0.75,0.90}  
\definecolor{bostonuniversityred}{rgb}{1.0, 0.0, 0.0}
\begin{document}

\newcommand{\rv}[1]{\textcolor{black}{#1}}
\newcommand{\rvtwo}[1]{\textcolor{black}{#1}}
\newcommand{\qj}[1]{\textcolor{black}{#1}}
\newcommand{\jy}[1]{\textcolor{red}{#1}}

\newcommand{\qjd}[1]{\textcolor{orange}{#1}}

\title[Combating Confirmation Bias: A Unified Pseudo-Labeling Framework for Entity Alignment]{Combating Confirmation Bias: A Unified Pseudo-Labeling Framework for Entity Alignment}


\author[1]{\fnm{Qijie} \sur{Ding}}\email{qijie.ding@sydney.edu.au}

\author*[1]{\fnm{Jie} \sur{Yin}}\email{jie.yin@sydney.edu.au}

\author[2]{\fnm{Daokun} \sur{Zhang}}\email{daokun.zhang@monash.edu}

\author[1]{\fnm{Junbin} \sur{Gao}}\email{junbin.gao@sydney.edu.au}

\affil[1]{\orgdiv{Discipline of Business Analytics}, \orgname{The University of Sydney}, \orgaddress{
\city{Sydney}, 
\state{NSW}, \country{Australia}}}

\affil[2]{\orgdiv{Department of Data Science and Artificial Intelligence}, \orgname{Monash University}, \orgaddress{\city{Melbourne}, \state{VIC}, \country{Australia}}}


\abstract{Entity alignment (EA) aims at identifying equivalent entity pairs across different knowledge graphs (KGs) that refer to the same real-world identity. It has been a compelling but challenging task that requires the integration of heterogeneous information from different KGs to expand the knowledge coverage and enhance inference abilities. To circumvent the shortage of seed alignments provided for training, recent EA models utilize pseudo-labeling strategies to iteratively add unaligned entity pairs predicted with high confidence to the seed alignments for model training. However, the adverse impact of confirmation bias during pseudo-labeling has been largely overlooked, thus hindering entity alignment performance. To systematically combat confirmation bias, we propose a new \underline{U}nified \underline{P}seudo-\underline{L}abeling framework for \underline{E}ntity \underline{A}lignment (UPL-EA) that explicitly alleviates pseudo-labeling errors to boost the performance of entity alignment. \rv{UPL-EA achieves this goal through two key innovations:} (1) Optimal Transport (OT)-based pseudo-labeling uses discrete OT modeling as an effective means to determine entity correspondences and reduce erroneous matches across two KGs. An effective criterion is derived to infer pseudo-labeled alignments that satisfy one-to-one correspondences; (2) \rvtwo{Parallel pseudo-label ensembling refines pseudo-labeled alignments by combining predictions over multiple models independently trained in parallel.} 
The ensembled pseudo-labeled alignments are thereafter used to augment seed alignments to reinforce subsequent model training for alignment inference. The effectiveness of UPL-EA in eliminating pseudo-labeling errors is both theoretically supported and experimentally validated. \rv{Our extensive results and in-depth analyses demonstrate the superiority of UPL-EA over 15 competitive baselines and its utility as a general pseudo-labeling framework for entity alignment.}}

\keywords{Entity Alignment, Pseudo-labeling, Optimal Transport, Knowledge Graphs}



\maketitle

\section{Introduction}

Knowledge Graphs (KGs) are large-scale structured knowledge bases that represent real-world entities (or concepts) and their relationships as a collection of factual triplets. Recent years have witnessed the release of various open-source KGs (e.g., Freebase~\citep{freebase}, YAGO~\citep{yago} and Wikidata~\citep{wikidata}) from general to specific domains and their proliferation to empower many artificial intelligence (AI) applications, such as recommender systems~\citep{guo2020survey}, question answering~\citep{yang2018hotpotqa} and information retrieval~\citep{paulheim2017knowledge}. Nevertheless, it has become a well-known fact that real-world KGs suffer from incompleteness arising from their complex, semi-automatic construction process. This has led to an increasing number of research efforts on KG completion, such as TransE~\citep{bordes2013translating} and TransH~\citep{wang2014knowledge}, which aim to add missing facts to individual KGs.
Unfortunately, due to its limited coverage and incompleteness, a single KG cannot fulfill the requirements for complex AI applications that build upon heterogeneous knowledge sources. This necessitates the integration of heterogeneous information from multiple individual KGs to enrich knowledge representation. Entity alignment (EA) is a crucial task towards this objective, which aims to establish the correspondence between equivalent entity pairs across different KGs that refer to the same real-world identity. 

Over the last decade, there has been a surge of research efforts dedicated to entity alignment across KGs. Most mainstream EA models are embedding-based; they embed KGs into a shared latent embedding space so that similarities between entities can be measured via their embeddings for alignment inference. 
\rv{To leverage structural information in KGs, more recent EA models exploit the power of Graph Neural Networks (GNNs) to encode KG structures for entity alignment.
Methods like GCN-Align~\citep{wang2018cross} utilize GCNs to learn better entity embeddings by aggregating features from neighboring entities. However, GCNs and their variants suffer from an over-smoothing issue~\citep{min2020scattering, jiang2022sparse}, where the embeddings of entities among local neighborhoods become indistinguishably similar as the number of convolution layers increases. 
To alleviate over-smoothing during GCN neighborhood aggregation,
recent works~\cite{wu2019jointly,wu2019relation,zhu2021relation2} use a highway strategy~\citep{srivastava2015highway} on GCN layers, which ``mixes" the smoothed entity embeddings with the original features. Despite achieving competitive results, these methods require an abundant amount of pre-aligned entity pairs (known as \textit{seed alignments}) provided for training}, which are labor-intensive and costly to acquire in real-world KGs. To tackle the shortage of seed alignments, recently proposed models, such as BootEA~\citep{sun2018bootstrapping}, IPTransE~\citep{zhu2017iterative}, MRAEA~\citep{mao2020mraea}, and RNM~\citep{zhu2021relation2}, adopt a bootstrapping strategy that iteratively selects unaligned entity pairs predicted with high confidence as pseudo-labeled alignments and adds them to seed alignments for subsequent model training. 
The bootstrapping strategy, originating from the field of statistics, is also referred to as pseudo-labeling---a predominant learning paradigm proposed to tackle label scarcity in semi-supervised learning. 


In general semi-supervised learning, pseudo-labeling approaches inherently suffer from confirmation bias~\citep{arazo2020pseudo,Tarvainen2017mean}. \rv{The confirmation bias refers to using incorrectly predicted labels generated on unlabeled data for subsequent training, thereby misleadingly increasing model confidence in incorrect predictions and leading to a biased model with degraded performance. } Unfortunately, there is a lack of understanding of the fundamental factors that give rise to confirmation bias for pseudo-labeling-based entity alignment. Our analysis (see Section~\ref{subsec:analysis}) advocates that the confirmation bias is exacerbated during pseudo-labeling for entity alignment. Due to the lack of sufficient seed alignments at the early stages of training, \rv{the existing models tend to learn uninformative entity embeddings and consequently generate error-prone pseudo-labeled alignments based on unreliable model predictions.}
We characterize pseudo-labeling errors into two types: (1) \rvtwo{\textbf{Conflicted misalignments}, where a single entity in one KG is simultaneously aligned with more than one entity in another KG with erroneous matches, violating the one-to-one correspondence}. 
(2) \rvtwo{\textbf{One-to-one misalignments}, where an entity in one KG is aligned to a single but incorrect counterpart in another KG}. The pseudo-labeling errors, if not properly mitigated, would propagate into subsequent model training, thereby jeopardizing the efficacy of pseudo-labeling-based entity alignment. However, current pseudo-labeling-based EA models have made only limited attempts to alleviate alignment conflicts using simple heuristics~\citep{zhu2017iterative,sun2019transedge,mao2020mraea,zhu2021relation2} or imposing constraints to enforce hard alignments~\citep{sun2018bootstrapping,ding2022conflict}, while the confirmation bias has been left under-explored.


\rv{To address the research gap, we propose a novel Unified Pseudo-Labeling framework for Entity Alignment (UPL-EA) aimed at alleviating confirmation bias and improving entity alignment performance. The key idea lies in ``reliably" pseudo-labeling unaligned entity pairs based on model predictions and augmenting seed alignments to iteratively improve model performance. UPL-EA comprises two essential components: Optimal Transport (OT)-based pseudo-labeling and \rvtwo{parallel pseudo-label ensembling}, to effectively reduce pseudo-labeling errors. OT-based pseudo-labeling considers entity alignment as a probabilistic matching process between entity sets in two KGs.}
\rv{An effective criterion is mathematically derived to select pseudo-labeled alignments that satisfy one-to-one correspondences, thus mitigating conflicted misalignments in model predictions. 
\rvtwo{Parallel pseudo-label ensembling reduces variability in pseudo-label selection by deriving  consensus predictions from multiple OT-based models trained in parallel}, thereby mitigating one-to-one misalignments.} The ensembled pseudo-labeled alignments are then used to augment seed alignments to reinforce subsequent model training for alignment inference. \rv{To our best knowledge, we are the first to address the confirmation bias inherent in pseudo-labeling-based entity alignment. Comprehensive experiments and analyses validate the superior performance of UPL-EA over state-of-the-art supervised and semi-supervised baselines and its utility as a general pseudo-labeling framework to improve entity alignment performance.}

The remainder of this paper is organized as follows. Section~\ref{section: preliminaries} provides a problem statement of pseudo-labeling-based entity alignment and presents an empirical analysis of confirmation bias that motivates this work. Section~\ref{section: method} presents the proposed framework, followed by an in-depth experimental evaluation reported in Section~\ref{section: experiments}. Related works are discussed in Section~\ref{section:related work}, and we conclude the paper in Section~\ref{section:conclusion}.

\section{Preliminaries}
\label{section: preliminaries}

In this section, we first provide a problem statement of pseudo-labeling-based entity alignment. Then, we perform a thorough analysis of confirmation bias during pseudo-labeling, which motivates the design of our proposed framework.

\subsection{Problem Statement}

A knowledge graph (KG) can be represented as $\mathcal{G} = \{\mathcal{E}, \mathcal{R}, \mathcal{T}\}$ with the entity set $\mathcal{E}$, relation set $\mathcal{R}$, and relational triplet set $\mathcal{T}$. \rv{Each triplet is denoted as $(e_{i}, r, e_{j}) \in \mathcal{T}$, which represents that a head entity $e_i\in\mathcal{E}$ is connected to a tail entity $e_j\in\mathcal{E}$ via a relation $r\in\mathcal{R}$. Each entity $e_i$ is characterized by an entity feature vector $ \bm{x}_i\in \mathbb{R}^{n}$, which can be obtained from entity names with semantic meanings.}

\rv{Formally, given two KGs, $\mathcal{G}_{1} = \{\mathcal{E}_{1}, \mathcal{R}_{1}, \mathcal{T}_{1}\}$ and $\mathcal{G}_{2} = \{\mathcal{E}_{2},\mathcal{R}_{2}, \mathcal{T}_{2}\}$, the task of entity alignment (EA) aims to discover a set of one-to-one equivalent entity pairs $\mathcal{I} = \{(e_i,e_j)\in\mathcal{E}_1\times\mathcal{E}_2|\;e_i\equiv e_j\}$ between $\mathcal{G}_1$ and $\mathcal{G}_2$, where $e_i \in \mathcal{E}_1$, $e_j \in \mathcal{E}_2$, and $\equiv$ indicates an equivalence relationship between $e_i$ and $e_j$. In many cases, a small set of equivalent entity pairs \rvtwo{$\mathcal{S} \subset \mathcal{I}$, known as \textit{prior seed alignments}, is provided beforehand and used for training.} Apart from the entities included in $\mathcal{S}$, there are two sets of unaligned entities $\mathcal{E}^{U}_1\subset \mathcal{E}_1$ and $\mathcal{E}^{U}_2 \subset \mathcal{E}_2$ in $\mathcal{G}_1$ and $\mathcal{G}_2$, respectively. 
}

\rv{In this work, we address real-world scenarios where seed alignments $\mathcal{S}$ are often scarce due to high labeling costs. We focus on the task of pseudo-labeling-based entity alignment, which aims to leverage both seed alignments and unaligned entity pairs to more effectively train an EA model in \rvtwo{a transductive semi-supervised setting}. This is achieved by selecting a set of unaligned entity pairs as pseudo-labeled alignments $\mathcal{S}_t\subset\mathcal{E}^{U}_1\times\mathcal{E}^{U}_2$ in each iteration $t$, and using $\mathcal{S}_t$ to iteratively augment seed alignments $\mathcal{S}$, i.e., $\mathcal{S}\leftarrow\mathcal{S}\cup \mathcal{S}_t$, for subsequent model training.}

\subsection{Analysis of Confirmation Bias}
\label{subsec:analysis}

The key to pseudo-labeling-based entity alignment lies in selecting reliable pseudo-labeled alignments to effectively boost model performance; otherwise, pseudo-labeling errors could propagate into subsequent model training, leading to confirmation bias~\citep{arazo2020pseudo}. 
To investigate the impact of confirmation bias on pseudo-labeling-based entity alignment, we perform an error analysis of a naive pseudo-labeling strategy used in previous studies~\citep{sun2019transedge}. This strategy simply selects pairs of unaligned entities whose embedding distances (defined using Eq.~(\ref{eq:dist})) are smaller than a pre-specified threshold as pseudo-labeled alignments. Our analysis is carried out on a widely used cross-lingual KG pair, DBP15K\textsubscript{ZH\_EN}, as detailed in Section~\ref{subsec: datasets and baselines}. \rvtwo{We follow the conventional 30\%-70\% split ratio to randomly partition 15,000 ground-truth alignments into training and test data. During model training, pseudo-labeled alignments are inferred from unaligned entity sets to augment seed alignments.} To understand the underlying causes of confirmation bias, we explicitly calculate, in each pseudo-labeling iteration, the numbers of \rvtwo{conflicted misalignments and one-to-one misalignments}, as well as the number of \rvtwo{correct one-to-one alignments, against ground-truth alignments} on the test data. 

\rvtwo{As shown in Fig.~\ref{fig:naive error number}, the naive strategy for selecting pseudo-labeled alignments introduces a substantial number of pseudo-labeling errors---including both conflicted misalignments and one-to-one misalignments---from the beginning. As training progresses, although the number of correct one-to-one alignments gradually increases owing to improved entity embeddings, both types of pseudo-labeling errors also accumulate and increase noticeably. This accumulation of errors gives rise to confirmation bias, severely hindering the capability of pseudo-labeling in improving entity alignment performance, as shown in Fig.~\ref{fig:accuracy}.}


\begin{figure*}[t]
     \begin{subfigure}[b]{0.32\textwidth}
         \centering
\includegraphics[width=\textwidth]{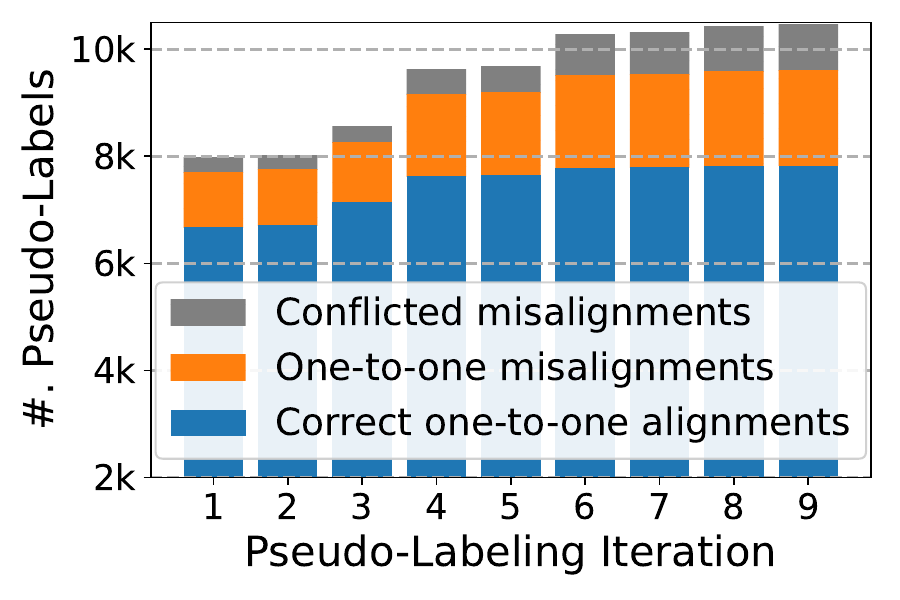}
\caption{\rvtwo{The naive pseudo-labeling strategy}}
\label{fig:naive error number}
     \end{subfigure}
    \hfill
    \begin{subfigure}[b]{0.33\textwidth}
        \centering
        \includegraphics[width=\textwidth]{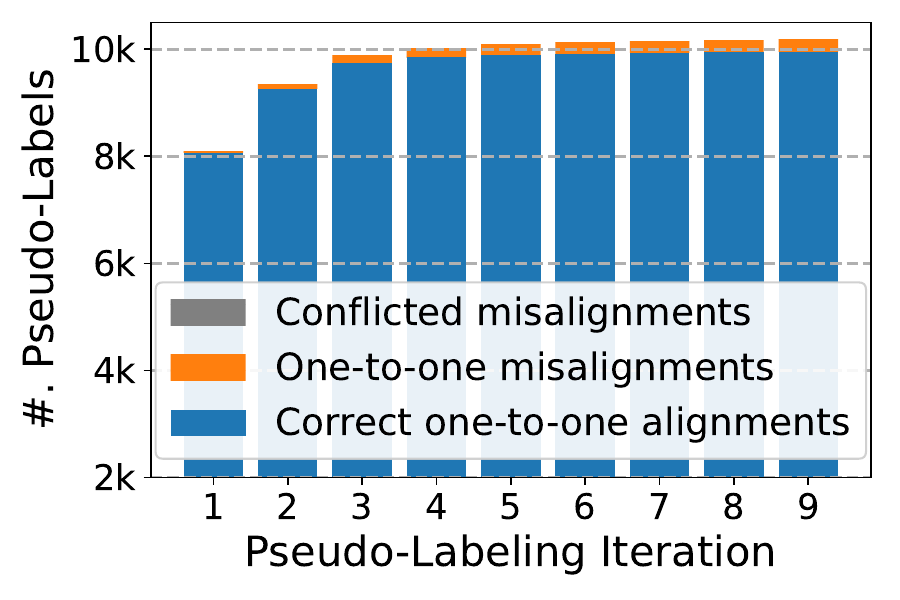}
        \caption{\rvtwo{Our unified pseudo-labeling (UPL) strategy}}
        \label{fig:uplea error number}
    \end{subfigure}
     \hfill
     \begin{subfigure}[b]{0.33\textwidth}
         \centering        \includegraphics[width=\textwidth]{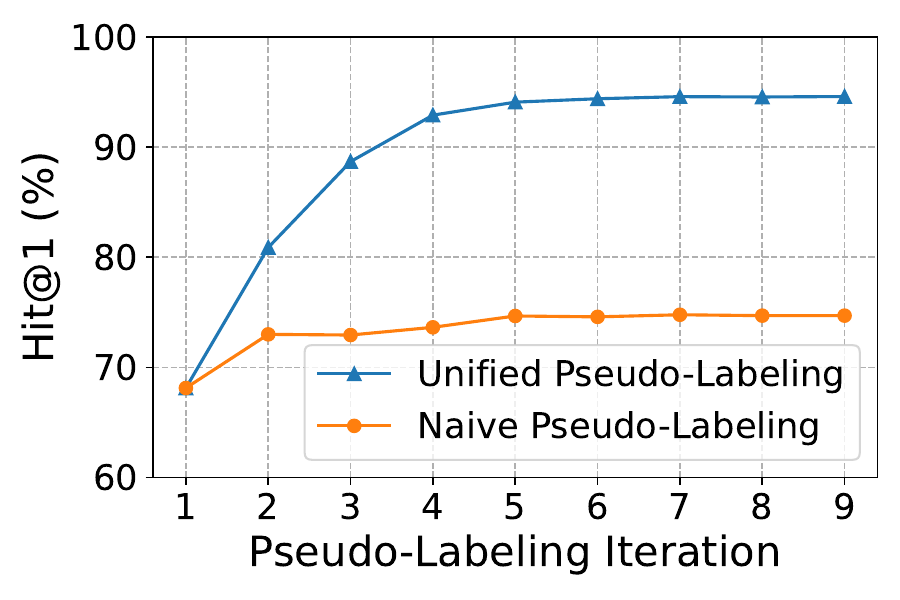}        
         \caption{\rvtwo{Entity alignment performance of two strategies}}
         \label{fig:accuracy}
     \end{subfigure}
\caption{Error analysis of confirmation bias. (a) Number of pseudo-labeled alignments selected by a naive pseudo-labeling strategy over pseudo-labeling iterations. (b) Number of pseudo-labeled alignments selected by the proposed UPL strategy over pseudo-labeling iterations. (c) Entity alignment performance comparison (Hit@1) of the naive pseudo-labeling strategy and the proposed UPL strategy. }
\label{fig:PL errors rate}
\end{figure*}

Our analysis affirms that the confirmation bias essentially stems from pseudo-labeling errors. These errors, if not adequately addressed, would propagate through subsequent model training and jeopardize the effectiveness of pseudo-labeling for entity alignment. \rvtwo{Motivated by this, our work focuses on explicitly identifying and eliminating the two types of pseudo-labeling errors: conflicted misalignments and one-to-one misalignments. As shown in Fig.~\ref{fig:uplea error number}, our proposed Unified Pseudo-Labeling (UPL) strategy can effectively eliminate all conflicted misalignments through OT-based pseudo-labeling and significantly reduce one-to-one misalignments via parallel pseudo-label ensembling across all iterations. Fig.~\ref{fig:accuracy} highlights the significant performance gains achieved by our proposed UPL strategy compared to the naive pseudo-labeling strategy.}



\section{The Proposed UPL-EA Framework}
\label{section: method}


With insights from our analysis in Section~\ref{subsec:analysis}, the proposed UPL-EA framework is designed to systematically address confirmation bias for pseudo-labeling-based entity alignment. The core of UPL-EA is a novel Unified Pseudo-Labeling (UPL) strategy that iteratively generates reliable pseudo-labeled alignments to enhance the \rvtwo{training of} entity alignment (EA) model. Essentially, UPL utilizes two key components: (1) OT-based pseudo-labeling, which generates pseudo-labeled alignments with one-to-one correspondences, effectively \rvtwo{eliminating conflicted misalignments; and (2) parallel pseudo-label ensembling, which combines pseudo-labeled alignments generated from multiple OT-based models independently trained in parallel, reducing variability in pseudo-label selection and mitigating one-to-one misalignments.}

\begin{figure*}[h]
\centering
\includegraphics[width=\textwidth]{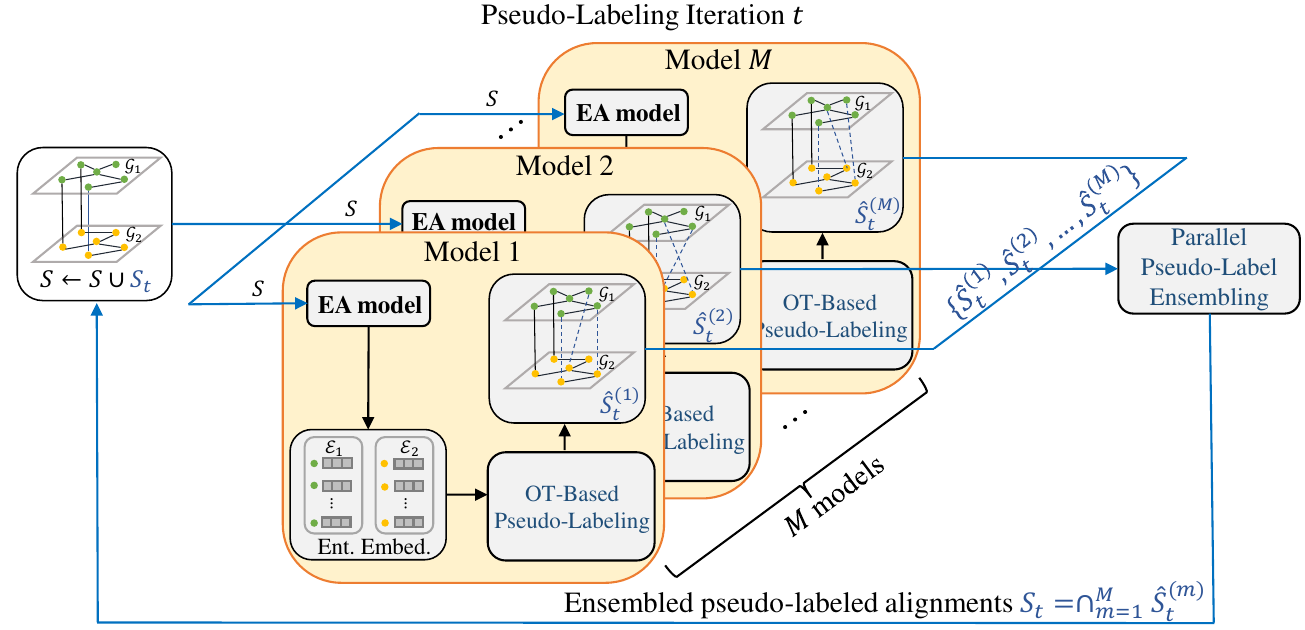}\vspace{0.2cm}
    \caption{\rvtwo{An overview of the proposed UPL-EA framework. In pseudo-labeling iteration $t$, $M$ EA models are trained in parallel to generate a set of conflict-free pseudo-labeled alignments via OT-based pseudo-labeling. These alignments are further fed into parallel pseudo-label ensembling to generate ensembled pseudo-labeled alignments, which are then used to augment seed alignments for subsequent model training in the next iteration $t+1$. }} 
    \label{fig:Framework_Overview}
\end{figure*}

\rvtwo{Fig.~\ref{fig:Framework_Overview} illustrates an overview of the proposed UPL-EA framework. In pseudo-labeling iteration $t$, an EA model learns entity embeddings based on a set of alignment seeds, $\mathcal{S}$, which are passed on to OT-based pseudo-labeling to generate conflict-free pseudo-labeled alignments. Rather than relying on predictions from a single model that are potentially unreliable, $M$ EA models are independently trained in parallel to generate a set of conflict-free pseudo-labeled alignments $\{\widehat{\mathcal{S}}_t^{(1)},\widehat{\mathcal{S}}_t^{(2)},...,\widehat{\mathcal{S}}_t^{(M)}\}$ through OT-based pseudo-labeling. These alignments are then combined through parallel pseudo-label ensembling, which retains only the consensus pseudo-labeled alignments generated from $M$ models, i.e., $\mathcal{S}_t=\cap_{m=1}^{M}\widehat{\mathcal{S}}_t^{(m)}$. The ensembled pseudo-labeled alignments $\mathcal{S}_t$ are then used to augment seed alignments, i.e., $\mathcal{S}\leftarrow\mathcal{S}\cup \mathcal{S}_t$, for subsequent model training in the next pseudo-labeling iteration $t+1$. Through this iterative process, the EA models and the UPL strategy mutually reinforce each other, progressively leading to more informative entity embeddings. Finally, the learned entity embeddings are used for entity alignment inference.}



\subsection{Entity Alignment Model}
\label{section:EAmodel}

The entity alignment (EA) model aims to learn informative entity embeddings and perform model training for entity alignment inference. 

\rvtwo{To enable the EA model to learn informative entity embeddings, our UPL-EA framework adopts a modular entity embedding encoder, $f_{\textrm{en}}$, designed to effectively capture structural information inherent in KGs. Given an entity $e_i \in \mathcal{E}_1 \cup \mathcal{E}_2$ with its associated feature vector $\bm{x}_i\in\mathbb{R}^n$, the encoder $f_{\textrm{en}}:\mathbb{R}^n \rightarrow \mathbb{R}^d $ maps the entity into a $d$-dimensional embedding space $\mathbb{R}^d$:
\begin{equation}
\label{eq:ent_embed}
\bm{h}_i=f_{\textrm{en}}(\bm{x}_i,\mathcal{G}_1, \mathcal{G}_2;\bm{\Theta}),
\end{equation}
where $\mathcal{G}_1$ and $\mathcal{G}_2$ are the two KGs, $\bm{\Theta}$ represents the learnable parameters of the EA model, and $\bm{h}_i\in\mathbb{R}^d$ is the encoded entity embedding. 
The encoder $f_{\textrm{en}}$ can be instantiated with any expressive entity embedding model. In this work, we adopt a highway-gated GCN with a global-local neighborhood aggregation scheme, following our previous work~\citep{ding2022conflict}.}

After obtaining the encoded entity embeddings $\{\bm{h}_i|\;e_i\in \mathcal{E}_1\cup\mathcal{E}_2\}$, \rv{we then define a margin-based loss function for embedding learning, such that the equivalent entities are encouraged to be close to each other in the embedding space: }
\rv{\begin{equation}
L=\sum_{(e_{i}, e_{j}) \in \mathcal{S}}\sum_{(e_i^-, e_j^-) \in\mathcal{S}^-_{(e_i,e_j)}} \max\bigl(0, d(e_i, e_j)-d(e_i^-, e_j^-)+\gamma\bigl),
\label{eq:hard_loss}
\end{equation}
where $\mathcal{S}$ denotes a set of \rvtwo{prior} seed alignments initially provided for training. \rv{$\mathcal{S}^-_{(e_i,e_j)}=\{(e_i,e_j^-)|\;e_j^-\in\mathcal{E}_2\setminus\{e_j\}\}\cup\{(e_i^-,e_j)|\;e_i^-\in\mathcal{E}_1\setminus\{e_i\}\}$} denotes the set of negative alignments, synthesized by negative sampling of a positive alignment $(e_i, e_j) \in \mathcal{S}$ as $(e_i, e_j^-) \notin \mathcal{S}$ and $(e_i^-, e_j) \notin \mathcal{S}$.} $\gamma$ is a hyper-parameter that determines the margin that separates positive alignments from negative alignments. $d(e_i,e_j)$ indicates the embedding distance between entity pair $(e_i,e_j)$ across two KGs, defined as:
\begin{equation}
  d(e_i,e_j)=\|\bm{h}_{i}-\bm{h}_{j}\|_1.
\label{eq:dist}
\end{equation}

\rv{The loss function in Eq.~(\ref{eq:hard_loss}) can be minimized with respect to entity embeddings $\{\bm{h}_i|\;e_i\in \mathcal{E}_1\cup\mathcal{E}_2\}$. To facilitate model training,} we adopt an \textit{adaptive negative sampling} strategy to obtain a set of negative alignments $\mathcal{S}^-_{(e_i,e_j)}$. Specifically, for each positive alignment $(e_i,e_j) \in \mathcal{S}$, we select $K$ nearest entities of $e_i$ (or $e_j$), measured using the embedding distance in Eq.~(\ref{eq:dist}), to replace $e_j$ (or $e_i$) and form $K$ negative counterparts $(e_i,e_j^-)$ (or $(e_i^-,e_j)$). This strategy helps generate ``hard" negative alignments and pushes their associated entities to be apart from each other in the embedding space. 

\subsection{\rv{Unified Pseudo-Labeling Strategy}}
\rv{After training the EA model, the learned entity embeddings can be used for alignment inference. However, as seed alignments initially provided for training are often limited, the performance of entity alignment can be suboptimal. Therefore, pseudo-labeling strategies can be designed to select a set of unaligned entity pairs as pseudo-labeled alignments, which are used to augment seed alignments for boosting model training. However, as previously discussed in Section~\ref{subsec:analysis}, a naive strategy inevitably introduces a considerable number of pseudo-labeling errors, leading to confirmation bias. To address this issue, we propose UPL, a unified pseudo-labeling framework that explicitly aims to mitigate \rvtwo{conflicted misalignments and one-to-one misalignments}.}

\rv{In what follows, the two key components of UPL: OT-based pseudo-labeling and parallel pseudo-label ensembling, are discussed in detail.
}



\subsubsection{OT-based Pseudo-Labeling}
\label{subsec:OT}

\rv{To mitigate \rvtwo{conflicted misalignments}, we propose using Optimal Transport (OT) as an effective means to reliably pseudo-label unaligned entity pairs across KGs to reinforce the training of the EA model. As a powerful mathematical framework for transforming one distribution into another, OT allows to more effectively identify one-to-one correspondences between entities, ensuring a coherent alignment configuration.}

Formally, we model the alignment between two unaligned entity sets $\mathcal{E}^{U}_1$ and $\mathcal{E}^{U}_2$ as an OT process to warrant the one-to-one correspondence, i.e., transporting each entity $e_i \in \mathcal{E}^{U}_1$ to a unique entity $e_j \in \mathcal{E}^{U}_2$, with minimal overall transport cost. Denote $\bm{C}\in \mathbb{R}^{|\mathcal{E}^{U}_1|\times |\mathcal{E}^{U}_2|}$ as the transport cost matrix, and without loss of generality, we assume $|\mathcal{E}^{U}_1|<|\mathcal{E}^{U}_2|$. The transport plan is a mapping function $T: e_i \rightarrow T(e_j)$, where $e_i\in\mathcal{E}^{U}_1$, $T(e_i)\in\mathcal{E}^{U}_2$. Thus, the objective of entity alignment is to find the optimal transport plan $T^*$ that minimizes the overall transport cost: 
\begin{equation}
T^*=\mathop{\arg\min}\limits_{T} \sum_{e_i\in \mathcal{E}^{U}_1} C_{e_i, T(e_i)}.
\label{eq:general_ot_problem}
\end{equation}

\rv{A critical aspect of the above objective is defining a reliable measure of the transport cost. One might directly use the distances between the learned entity embeddings. However, 
when only a limited number of seed alignments are available for training, the learned entity embeddings can be uninformative, particularly during the early training stages before the EA model has converged. As a result, using these embeddings to calculate the distances for defining the transport cost can be error-prone.} To address this, we resort to rectifying the embedding distance using relational neighborhood matching~\citep{zhu2021relation2}\rv{, which complements the training of the EA model, especially during the early stages, for learning better entity embeddings and providing a more reliable cost measure for OT modeling.} The principle of distance rectification is to leverage relational contexts within local neighborhoods to help determine the extent to which two entities should be aligned. Intuitively, if two entities $e_i \in \mathcal{E}_1$ and $e_j \in \mathcal{E}_2$ share more aligned neighboring entities/relations, the distance between their embeddings should be smaller, \rv{indicating a higher likelihood of being aligned to each other. Based on this intuition, the transport cost for OT-based pseudo-labeling is defined as follows:}
\begin{equation}
\label{eq:transport cost}
C_{e_i,e_j} =\rv{d(e_i,e_j)-\lambda s(e_i,e_j)},\; e_i\in \mathcal{E}^{U}_1, e_j\in \mathcal{E}^{U}_2,
\end{equation}
where $\lambda$ is a trade-off hyper-parameter, and $s(e_i, e_j)$ is a scoring function indicating the degree to which the relational contexts of two entities $e_i$ and $e_j$ match. 
\rv{Let $\mathcal{M}_{(e_i,e_j)}$ represent the set of aligned neighboring relation-entity tuples for entity pair $(e_i, e_j)$, obtained following~\citep{zhu2021relation2}. The score $s(e_i, e_j)$ is calculated as:
\begin{equation}
    s(e_i, e_j)= \frac{\sum_{(r, e_k)\in\mathcal{M}_{(e_i,e_j)}}\xi_1(r, e_k)\xi_2(r, e_k)}{|\mathcal{N}_e(e_{i})|+|\mathcal{N}_e({e}_{j})|},
\end{equation}
where $\mathcal{N}_e(e_i)$ and $\mathcal{N}_e(e_j)$ denote the sets of neighboring entities for $e_i$ and $e_j$, respectively. $\xi_1(r, e_k)$ and $\xi_2(r, e_k)$ indicate the reciprocal frequency of triplets associated with neighboring tuple $(r, e_k)$ for $e_i$ in $\mathcal{T}_1$ and $e_j$ in $\mathcal{T}_2$, respectively. }


\rv{The objective in Eq.~(\ref{eq:general_ot_problem}) defines a hard assignment optimization problem, which, however, does not scale well}. To enable more efficient optimization and to allow for a more flexible alignment configuration, we reformulate this objective as a discrete OT problem, where the optimal transport plan is considered as a coupling matrix $\bm{P}^*\in \mathbb{R}_{+}^{|\mathcal{E}^{U}_1|\times|\mathcal{E}^{U}_2|}$ between two discrete distributions. Denote $\mu$ and $\nu$ as two discrete probability distributions over all entities $\{e_i|e_i\in \mathcal{E}^{U}_1\}$ and $\{e_j|e_j\in \mathcal{E}^{U}_2\}$, respectively. Without any alignment preference, the two discrete distributions $\mu$ and $\nu$ are assumed to follow a uniform distribution such that $\mu=\frac{1}{|\mathcal{E}^{U}_1|}\sum_{e_i\in \mathcal{E}^{U}_1}\delta_{e_i}$ and $\nu=\frac{1}{|\mathcal{E}^{U}_2|}\sum_{e_j\in \mathcal{E}^{U}_2}\delta_{e_j}$, where $\delta_{e_i}$ and $\delta_{e_j}$ are the Dirac function centered on $e_i$ and $e_j$, respectively. Both $\mu$ and $\nu$ are bounded to sum up to one:
 $\sum_{e_i\in \mathcal{E}^{U}_1}\mu(e_i)=\sum_{e_i\in \mathcal{E}^{U}_1}\frac{1}{|\mathcal{E}^{U}_1|}=1$ and $\sum_{e_j\in \mathcal{E}^{U}_2}\nu(e_j)=\sum_{e_j\in \mathcal{E}^{U}_2}\frac{1}{|\mathcal{E}^{U}_2|}=1$. \rv{Accordingly, the OT objective is formulated to find the optimal coupling matrix $\bm{P}^*$ between $\mu$ and $\nu$:}
\begin{align}
\label{eq:specific_ot_problem}
\bm{P}^*=&\mathop{\arg\min}\limits_{\bm{P}\in \Pi(\mu, \nu)} \sum_{e_i\in \mathcal{E}^{U}_1} \sum_{e_j\in \mathcal{E}^{U}_2}P_{e_i,e_j}\cdot C_{e_i,e_j}, \\
\text{ subject to:} &\sum_{e_j\in \mathcal{E}^{U}_2} P_{e_i,e_j}=\mu(e_i)=\frac{1}{|\mathcal{E}^{U}_1|},\nonumber\\
& \sum_{e_i\in \mathcal{E}^{U}_1} P_{e_i,e_j}=\nu(e_j)=\frac{1}{|\mathcal{E}^{U}_2|},\nonumber\\
& P_{e_i,e_j}\geq0, \forall e_i\in \mathcal{E}^{U}_1, \forall e_j\in \mathcal{E}^{U}_2, \nonumber
\end{align}
where $\Pi(\mu, \nu)=\{\bm{P}\in\mathbb{R}_{+}^{|\mathcal{E}^{U}_1|\times|\mathcal{E}^{U}_2|}|\;\bm{P}\mathbf{1}_{|\mathcal{E}^{U}_2|}=\mu,\;\bm{P}^\top \mathbf{1}_{|\mathcal{E}^{U}_1|}=\nu\}$ is the set of all joint probability distributions with marginal probabilities $\mu$ and $\nu$, $\mathbf{1}_n$ denotes an $n$-dimensional vector of ones. $\bm{P}$ is a coupling matrix signifying probabilistic alignments between two unaligned entity sets $\mathcal{E}^{U}_1$ and $\mathcal{E}^{U}_2$. Therefore, $P_{e_i,e_j}$ indicates the amount of probability mass transported from $\mu(e_i)$ to $\nu(e_j)$. A larger value of $P_{e_i,e_j}$ indicates a higher likelihood of $e_i$ and $e_j$ being aligned to each other. 

To solve the discrete OT problem in Eq.~(\ref{eq:specific_ot_problem}), several exact algorithms have been proposed, such as interior point methods~\citep{wachter2006implementation} and network simplex~\citep{orlin1997polynomial}. \rv{While these exact algorithms guarantee to find a closed-form optimal transport plan, their high computational cost makes them intractable for iterative pseudo-labeling. 
Thus}, we propose to use \rv{an} entropy regularized OT problem, as defined in Eq.~(\ref{eq:sinkhorn_ot_problem}) below, which can be solved by the efficient Sinkhorn algorithm~\citep{cuturi2013sinkhorn}:
\begin{equation}
\begin{aligned}
\bm{P}^*=\mathop{\arg\min}\limits_{P\in \Pi(\mu, \nu)} \sum_{e_i\in \mathcal{E}^{U}_1} \sum_{e_j\in \mathcal{E}^{U}_2}P_{e_i,e_j}\cdot C_{e_i,e_j} + \beta \sum_{e_i\in \mathcal{E}^{U}_1} \sum_{e_j\in \mathcal{E}^{U}_2} P_{e_i,e_j} \text{log}P_{e_i,e_j},
\label{eq:sinkhorn_ot_problem}
\end{aligned}
\end{equation}
where $\beta$ is a hyper-parameter that controls the \rv{strength} of regularization. Solving the above entropy regularized OT problem can be easily implemented using popular deep-learning frameworks \rv{such as} PyTorch and TensorFlow. 


Once the optimal coupling matrix $\bm{P}^*$ is estimated, entity alignments can be inferred accordingly. Since one-to-one correspondences are crucial for eliminating conflicted misalignments, \rv{we further propose a selection criterion to identify entity pairs as pseudo-labeled alignments:}
\begin{equation}
\rv{\widehat{\mathcal{S}}_t} = \{(e_i,e_j)|\;P^*_{e_i, e_j}>\frac{1}{2\cdot\text{min}(|\mathcal{E}^{U}_1|, |\mathcal{E}^{U}_2|)}, \ e_i\in \mathcal{E}^{U}_1, e_j \in \mathcal{E}^{U}_2\}. 
\end{equation}
This criterion ensures that the selected \rv{pseudo-labeled alignments} satisfy one-to-one correspondence with theoretical guarantees. \rv{Unlike in previous works~\citep{zhu2017iterative,sun2019transedge,mao2020mraea,zhu2021relation2,ding2022conflict}, this approach does not require pre-specifying the threshold}.  

\vspace{0.3cm}
\begin{theorem}[]\label{theorem}
\rv{Any pseudo-labeled alignment $(e_i,e_j), e_i\in \mathcal{E}^{U}_1, e_j \in \mathcal{E}^{U}_2$ that satisfies the condition $P^*_{e_i, e_j}>\frac{1}{2\cdot \text{min}(|\mathcal{E}^{U}_1|, |\mathcal{E}^{U}_2|)}$ warrants one-to-one correspondence, such that no conflicted entity pairs, $\{(e_i,e_k)|\;e_k\in \mathcal{E}^{U}_2 \setminus \{e_j\}\}$ and $\{(e_l,e_j)|\;e_l\in \mathcal{E}^{U}_1 \setminus \{e_i\}\}$, are selected as pseudo-labeled alignments.}
\end{theorem}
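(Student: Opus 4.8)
The plan is to exploit the marginal constraints of the coupling matrix $\bm{P}^*\in\Pi(\mu,\nu)$ together with a simple counting/pigeonhole argument. Write $m=\min(|\mathcal{E}^U_1|,|\mathcal{E}^U_2|)$ and set the threshold $\tau=\frac{1}{2m}$. Suppose, for contradiction, that $(e_i,e_j)$ is a selected pseudo-labeled alignment, so $P^*_{e_i,e_j}>\tau$, and suppose additionally that some conflicting pair is also selected — without loss of generality a pair $(e_i,e_k)$ with $k\neq j$ sharing the same left entity $e_i$, so $P^*_{e_i,e_k}>\tau$. The first step is to combine these with the row-marginal constraint $\sum_{e_l\in\mathcal{E}^U_2}P^*_{e_i,e_l}=\mu(e_i)=\frac{1}{|\mathcal{E}^U_1|}$. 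Since all entries are nonnegative, the two singled-out entries alone already force $\mu(e_i)\ge P^*_{e_i,e_j}+P^*_{e_i,e_k}>2\tau=\frac{1}{m}$.

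The second step is to observe that $\frac{1}{m}=\frac{1}{\min(|\mathcal{E}^U_1|,|\mathcal{E}^U_2|)}\ge\frac{1}{|\mathcal{E}^U_1|}=\mu(e_i)$, which contradicts the inequality just derived. Hence no two entries in row $e_i$ can both exceed $\tau$, i.e. at most one column can be selected against $e_i$. The symmetric argument, using the column-marginal constraint $\sum_{e_l\in\mathcal{E}^U_1}P^*_{e_l,e_j}=\nu(e_j)=\frac{1}{|\mathcal{E}^U_2|}\le\frac{1}{m}$, shows that at most one row can be selected against any fixed $e_j$. Together these two facts say precisely that the selected set $\widehat{\mathcal{S}}_t$ contains no conflicted pairs of either form, which is the claim.

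One small bookkeeping point I would make explicit: the problem statement assumes $|\mathcal{E}^U_1|<|\mathcal{E}^U_2|$, so $m=|\mathcal{E}^U_1|$, $\mu$ is the uniform distribution with mass exactly $\frac{1}{m}$ per atom, and $\nu$ has mass $\frac{1}{|\mathcal{E}^U_2|}<\frac{1}{m}$ per atom; the inequalities above still go through verbatim in both directions, since I only ever use $\mu(e_i)\le\frac{1}{m}$ and $\nu(e_j)\le\frac{1}{m}$. I would also remark that the constant $2$ in the threshold is exactly what makes the pigeonhole work: if two entries in a row each exceeded $\frac{1}{2m}$ their sum would strictly exceed the total row mass $\le\frac{1}{m}$, an impossibility.

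I do not anticipate a genuine obstacle here — the result is essentially a one-line consequence of nonnegativity plus the transport marginals. The only thing to be careful about is not over-claiming: the theorem guarantees that the \emph{selected} set is conflict-free, not that every entity in $\mathcal{E}^U_1$ receives a match, and not that $\bm{P}^*$ itself (or its Sinkhorn-regularized surrogate) is a permutation matrix; the argument is purely about which entries can simultaneously lie above the threshold, so it applies equally to the exact OT solution of Eq.~(\ref{eq:specific_ot_problem}) and to the entropic solution of Eq.~(\ref{eq:sinkhorn_ot_problem}).
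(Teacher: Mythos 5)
Your proof is correct and takes essentially the same approach as the paper's: both rest on the marginal constraints of $\bm{P}^*\in\Pi(\mu,\nu)$ plus non-negativity, with the threshold $\frac{1}{2\cdot\min(|\mathcal{E}^U_1|,|\mathcal{E}^U_2|)}$ chosen so that two above-threshold entries cannot coexist in a row or column. The only cosmetic differences are that you phrase it as a pigeonhole contradiction and treat both size orderings uniformly via $\mu(e_i)\le\frac{1}{m}$, $\nu(e_j)\le\frac{1}{m}$, whereas the paper argues directly (subtracting the selected entry from the row/column sum to bound the remaining entries below the threshold) under the WLOG assumption $|\mathcal{E}^U_1|<|\mathcal{E}^U_2|$.
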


\begin{proof}
Given the optimized coupling matrix $\bm{P}^*\in \mathbb{R}_{+}^{|\mathcal{E}^{U}_1|\times|\mathcal{E}^{U}_2|}$ \rv{subject to} the constraints of  $\sum_{e_j\in \mathcal{E}^{U}_2} P^*_{e_i,e_j}=\frac{1}{|\mathcal{E}^{U}_1|}$ for all rows ($\forall e_i\in \mathcal{E}^{U}_1$) and $\sum_{e_i\in \mathcal{E}^{U}_1} P^*_{e_i,e_j}=\frac{1}{|\mathcal{E}^{U}_2|}$ for all columns ($\forall e_j\in \mathcal{E}^{U}_2$). Assume \rv{that} $|\mathcal{E}^{U}_1|<|\mathcal{E}^{U}_2|$, the decision threshold is $\frac{1}{2\cdot\text{min}(|\mathcal{E}^{U}_1|, |\mathcal{E}^{U}_2|)}=\frac{1}{2|\mathcal{E}^{U}_1|}$. Entity pairs $\{(e_i,e_j)|\;P^*_{e_i, e_j}>\frac{1}{2|\mathcal{E}^{U}_1|}, e_i\in \mathcal{E}^{U}_1, e_j \in \mathcal{E}^{U}_2\}$ are selected as \rv{pseudo-labeled alignments}.

For each pseudo-labeled \rv{alignment} $(e_i,e_j)$ with a probability value $P^*_{e_i,e_j}>\frac{1}{2|\mathcal{E}^{U}_1|}$, 
we can prove that no conflicted \rv{entity pairs} $\{(e_i,e_k)|\;e_k\in \mathcal{E}^{U}_2 \setminus \{e_j\}\}$ associated with $e_i$ are selected as \rv{pseudo-labeled alignments}:
\begin{align}
\label{eq:rearrange}
P^*_{e_i,e_j}&>\frac{1}{2|\mathcal{E}^{U}_1|},\nonumber\\
\sum_{e_j\in \mathcal{E}^{U}_2}P^*_{e_i,e_j}-P^*_{e_i,e_j}&<\sum_{e_j\in \mathcal{E}^{U}_2}P^*_{e_i,e_j}-\frac{1}{2|\mathcal{E}^{U}_1|},\nonumber\\
\sum_{e_k\in \mathcal{E}^{U}_2 \setminus \{e_j\}}P^*_{e_i,e_k}+P^*_{e_i,e_j}-P^*_{e_i,e_j}&<\frac{1}{|\mathcal{E}^{U}_1|}-\frac{1}{2|\mathcal{E}^{U}_1|},\nonumber\\
\sum_{e_k\in \mathcal{E}^{U}_2 \setminus \{e_j\}}P^*_{e_i,e_k}&<\frac{1}{2|\mathcal{E}^{U}_1|}.
\end{align}

Since the coupling matrix $\bm{P}^*\in \mathbb{R}_{+}^{|\mathcal{E}^{U}_1|\times|\mathcal{E}^{U}_2|}$ has non-negative entries, the summation $\sum_{e_k\in \mathcal{E}^{U}_2 \setminus \{e_j\}}P^*_{e_i,e_k}$ from Eq.~(\ref{eq:rearrange}) must be no smaller than \rvtwo{any of its components}, i.e., $P^*_{e_i,e_k}\leq\sum_{e_k\in \mathcal{E}^{U}_2 \setminus \{e_j\}}P^*_{e_i,e_k}$, $\forall e_k\in \mathcal{E}^{U}_2 \setminus \{e_j\}$. \rvtwo{Please note that, as we focus on a transductive semi-supervised setting, the size of two unaligned entity sets, $|\mathcal{E}^{U}_1|$ and $|\mathcal{E}^{U}_2|$, are known.} Together with Eq.~(\ref{eq:rearrange}), we can further derive that any component in the summation is smaller than the decision threshold, i.e., $P^*_{e_i,e_k}\rv{\leq\sum_{e_k\in \mathcal{E}^{U}_2 \setminus \{e_j\}}P^*_{e_i,e_k}}<\frac{1}{2|\mathcal{E}^{U}_1|}$, $\forall e_k\in \mathcal{E}^{U}_2 \setminus \{e_j\}$. 
In other words, all other probability values in the same row of $P^*_{e_i,e_j}$ are smaller than the decision threshold. Thus, no conflicted entity pairs $\{(e_i,e_k)|\;e_k\in \mathcal{E}^{U}_2 \setminus \{e_j\}\}$ associated with $e_i$ are selected as \rv{pseudo-labeled alignments}. 

Similarly, for each pseudo-labeled \rv{alignment} $(e_i,e_j)$ with a probability value $P^*_{e_i,e_j}>\frac{1}{2|\mathcal{E}^{U}_1|}$, we can prove that no conflicted entity pairs $\{(e_l,e_j)|\;e_l\in \mathcal{E}^{U}_1 \setminus \{e_i\}\}$ associated with entity $e_j$ are selected as \rv{pseudo-labeled alignments}. Similar to Eq.~(\ref{eq:rearrange}), we can also obtain $\sum_{e_l\in \mathcal{E}^{U}_1 \setminus \{e_i\}}P^*_{e_l,e_j}<\frac{1}{2|\mathcal{E}^{U}_2|}$ and $P^*_{e_l,e_j}\leq \sum_{e_l\in \mathcal{E}^{U}_1 \setminus \{e_i\}}P^*_{e_l,e_j},\forall e_l\in \mathcal{E}^{U}_1 \setminus \{e_i\}$. Together with the assumption of $|\mathcal{E}^{U}_1|<|\mathcal{E}^{U}_2|$, we can further derive that $P^*_{e_l,e_j}\leq \sum_{e_l\in \mathcal{E}^{U}_1 \setminus \{e_i\}}P^*_{e_l,e_j}<\frac{1}{2|\mathcal{E}^{U}_2|}<\frac{1}{2|\mathcal{E}^{U}_1|}$. Therefore, all other probability values in the same column of $P^*_{e_i,e_j}$ are smaller than the decision threshold, i.e., $P^*_{e_l,e_j}<\frac{1}{2|\mathcal{E}^{U}_1|}$, $\forall e_l\in \mathcal{E}^{U}_1 \setminus \{e_i\}$. Hence, no conflicted \rv{entity pairs} $\{(e_l,e_j)|\;e_l\in \mathcal{E}^{U}_1 \setminus \{e_i\}\}$ associated with entity $e_j$ are selected as \rv{pseudo-labeled alignments}. 

In summary, we conclude that the selected \rv{pseudo-labeled alignments} $\{(e_i,e_j)|\;P^*_{e_i, e_j}>\frac{1}{2\cdot \text{min}(|\mathcal{E}^{U}_1|,|\mathcal{E}^{U}_2|)}, e_i\in \mathcal{E}^{U}_1, e_j \in \mathcal{E}^{U}_2\}$ are guaranteed to be one-to-one alignments when $|\mathcal{E}^{U}_1|<|\mathcal{E}^{U}_2|$, and the same conclusion also holds when $|\mathcal{E}^{U}_1|\geq|\mathcal{E}^{U}_2|$. 
\end{proof}

\rv{The OT-based pseudo-labeling algorithm} is provided in Algorithm~\ref{algorithm_sinkhorn}. \rv{In Step 1, the algorithm starts by calculating the transport cost matrix $\bm{C}$, with a time complexity of $O(|\mathcal{E}^{U}_1|\cdot|\mathcal{E}^{U}_2|\cdot d)$, where $d$ is the embedding dimension}. In Steps 2-9, the Sinkhorn algorithm takes the transport cost matrix $\bm{C}$ as input to estimate the optimal transport plan $\bm{P}^*$ via iterative row normalization and column normalization, the time complexity is $O(|\mathcal{E}^{U}_1|\cdot|\mathcal{E}^{U}_2|/\beta)$. In Step 10, entity pairs with values in $\bm{P}^*$ larger than the decision threshold are selected as \rv{pseudo-labeled alignments}. Finally, in Step 11, the algorithm returns a set of \rv{conflict-free pseudo-labeled alignments $\widehat{\mathcal{S}}_t$}. The overall time complexity of Algorithm~\ref{algorithm_sinkhorn} is $O(|\mathcal{E}^{U}_1|\cdot|\mathcal{E}^{U}_2|\cdot d)$. 

\begin{algorithm}[h!]
\caption{OT-based Pseudo-Labeling with Sinkhorn Algorithm \label{algorithm_sinkhorn}}
\KwIn{\rv{Unaligned entity sets $\mathcal{E}^{U}_1 \subset \mathcal{E}_1$ and $\mathcal{E}^{U}_2 \subset \mathcal{E}_2$, entity embeddings $\{\bm{h}_i|\; e_i\in\mathcal{E}^{U}_1 \cup \mathcal{E}^{U}_2\}$ and
regularization hyper-parameter $\beta$.}}
\KwOut{Pseudo-labeled \rv{alignments $\widehat{\mathcal{S}}_t$}} 
\rv{
Calculate transport cost $\bm{C}$ according to Eq.~(\ref{eq:transport cost})}\;
Initialize $\bm{P}=\mathbf{1}_{|\mathcal{E}^{U}_1|}\mathbf{1}_{|\mathcal{E}^{U}_2|}^\top$\; 
$\bm{a}=\frac{1}{|\mathcal{E}^{U}_1|}\mathbf{1}_{|\mathcal{E}^{U}_1|}$, $\bm{Z}=\text{e}^{-\frac{1}{\beta}\bm{C}}$\;
\Repeat{\rv{convergence or reaching a fixed number of iterations}}{
$\bm{Q}=\bm{Z}\odot \bm{P}$\;
$\bm{b} = \frac{1}{|\mathcal{E}^{U}_1|Q^\top \bm{a}}$, $\bm{a} = \frac{1}{|\mathcal{E}^{U}_2|Q \bm{b}}$\;
$\bm{P} = \bm{a}\bm{b}^\top \odot \bm{Q}$\;
}
Obtain the optimal transport plan $\bm{P}^*=\bm{P}$\;
\rv{Select} conflict-free \rv{pseudo-labeled alignments} $\rv{\widehat{\mathcal{S}}_t} = \{(e_i,e_j)|\;P^*_{e_i, e_j}>\frac{1}{2\cdot\text{min}(|\mathcal{E}^{U}_1|, |\mathcal{E}^{U}_2|)}, e_i\in \mathcal{E}^{U}_1, e_j \in \mathcal{E}^{U}_2\}$\; 
\Return{pseudo-labeled \rv{alignments $\widehat{\mathcal{S}}_t$}.}
\end{algorithm}

\subsubsection{\rvtwo{Parallel Pseudo-Label Ensembling}}

\rvtwo{Through OT-based pseudo-labeling, conflict-free pseudo-labeled alignments are inferred. However, these alignments remain susceptible to one-to-one misalignments, particularly when the learned entity embeddings are still uninformative in the early stages of model training. To further mitigate one-to-one misalignments, ensemble learning can be exploited to reduce variability in pseudo-label selection in semi-supervised settings. Self-ensembling methods, such as temporal ensembling~\citep{laine2017temporal}, aggregate the predictions from a single model across different training epochs. Although self-ensembling has been shown to enhance prediction consistency in semi-supervised settings, it can inadvertently introduce confirmation bias by imposing cross-iteration dependencies and exacerbating error propagation in the context of pseudo-labeling. }

\rvtwo{To address this issue, we propose a parallel ensembling approach that refines pseudo-labeled alignments by combining predictions from multiple OT-based models trained in parallel. By seeking consensus across multiple models, this approach reduces prediction variability, thus enhancing the overall quality and reliability of pseudo-labeled alignments. }

\rvtwo{Formally, in pseudo-labeling iteration $t$, given $M$ sets of pseudo-labeled alignments $\{\widehat{\mathcal{S}}_t^{(1)},\widehat{\mathcal{S}}_t^{(2)},...,\widehat{\mathcal{S}}_t^{(M)}\}$ inferred from $M$ OT-based models independently trained in parallel, we generate the ensembled pseudo-labeled alignments by taking those that are consistently selected across all sets in $\{\widehat{\mathcal{S}}_t^{(1)},\widehat{\mathcal{S}}_t^{(2)},...,\widehat{\mathcal{S}}_t^{(M)}\}$:}
\begin{equation}
\rvtwo{\mathcal{S}_t =\cap_{m=1}^{M}\widehat{\mathcal{S}}_t^{(m)}= \bigl\{(e_i,e_j)|\sum_{m=1}^{M}\mathbbm{1}\bigl( (e_i,e_j)\in\widehat{\mathcal{S}}_t^{(m)}\bigl) =M, e_i\in \mathcal{E}^{U}_1, e_j\in \mathcal{E}^{U}_2\bigl\},}
\end{equation}
where $\mathbbm{1}(\cdot)$ is a binary indicator function. To decorrelate the dependency among $M$ EA models, their model parameters are initialized independently. 

By focusing on consensus alignments from multiple OT-based models, our parallel ensembling approach is expected to achieve higher pseudo-labeling precision compared to relying on a single model’s predictions. \rvtwo{This strategy relates to consistency-based techniques such as Dual Student~\citep{ke2019dual}, which combines loosely coupled predictions from two independently trained models and introduces a stabilization constraint in the loss function to enhance prediction consistency on unlabeled data. While sharing a similar objective, our method offers a simpler yet effective alternative, as empirically demonstrated in Section~\ref{subsection:ensembling}.}

Finally, the ensembled pseudo-labeled alignments \rvtwo{$\mathcal{S}_t$} are used to augment seed alignments $\mathcal{S}$ as follows:
\begin{equation}
\rvtwo{\mathcal{S}\leftarrow\mathcal{S} \cup \mathcal{S}_t}.
\end{equation}
The augmented seed alignments $\mathcal{S}$ include a considerable number of reliable pseudo-labeled alignments, which in turn strengthen subsequent model training.

\begin{algorithm}[t]
\caption{UPL-EA Training Process}\label{algorithm1}
\KwIn{Two KGs $\mathcal{G}_1=\{\mathcal{E}_1,\mathcal{R}_1,\mathcal{T}_1\}$, $\mathcal{G}_2=\{\mathcal{E}_2,\mathcal{R}_2,\mathcal{T}_2\}$, seed alignments $\mathcal{S}$.}
\KwOut{Learned entity embeddings.}
\Repeat{convergence or reaching a fixed number of iterations}{
\ForPar{\rvtwo{$m$ in $1$ \dots $M$}}{
\rvtwo{Learn entity embeddings based on $\mathcal{S}$ by minimizing Eq.~(\ref{eq:hard_loss})\;}
Infer pseudo-labeled alignments \rvtwo{$\widehat{\mathcal{S}}_t^{(m)}$} via Algorithm~\ref{algorithm_sinkhorn}\; 
}
Ensemble pseudo-labeled alignments \rvtwo{$\mathcal{S}_t=\cap_{m=1}^{M}\widehat{\mathcal{S}}_t^{(m)}$}\;
Augment seed alignments: \rvtwo{$\mathcal{S}\leftarrow\mathcal{S} \cup \mathcal{S}_t$};
}
\Return{learned entity embeddings.} 
\end{algorithm}

\subsection{Overall Training Procedure} 

\rvtwo{The UPL-EA training procedure is summarized in Algorithm~\ref{algorithm1}. 
In Steps 2-4, $M$ EA models are trained in parallel using seed alignments to obtain informative entity embeddings, which are then passed on to OT-based pseudo-labeling to infer conflict-free pseudo-labeled alignments.
In Steps 5-6, parallel pseudo-label ensembling is performed over $M$ models to generate ensembled pseudo-labeled alignments, which are thereafter used to augment seed alignments for subsequent model training. }
Finally, the learned entity embeddings are returned as the output. For alignment inference, the learned entity embeddings are used to infer new aligned entity pairs via Algorithm~\ref{algorithm_sinkhorn}.

\rv{In Algorithm~\ref{algorithm1}, the time complexity of learning entity embeddings in Step 3 is $O((|\mathcal{E}_1\cup\mathcal{E}_2|)\cdot d^2)$, where $d$ is the entity embedding dimension, and the time complexity of OT-based pseudo-labeling in Step 4 is $O(|\mathcal{E}^{U}_1|\cdot|\mathcal{E}^{U}_2|\cdot d)$. Thus, the time complexity of Algorithm~\ref{algorithm1} is $O(I\cdot(|\mathcal{E}^{U}_1| \cdot |\mathcal{E}^{U}_2|\cdot d + (|\mathcal{E}_1\cup\mathcal{E}_2|) \cdot d^2))$, where $I$ is the maximum number of pseudo-labeling iterations. }

\section{Experiments}
\label{section: experiments}

In this section, we validate the efficacy of our proposed UPL-EA framework through extensive experiments, ablation studies and in-depth analyses on benchmark datasets. 

\subsection{\rv{Experimental Settings}}
\label{subsec: datasets and baselines}
This section presents the experimental settings, including benchmark datasets used and detailed experimental setup.

\subsubsection{Datasets}
To evaluate the effectiveness of our UPL-EA framework, we carry out experiments on both cross-lingual datasets \rv{and cross-source monolingual datasets}. The statistics of all datasets are summarized in Table~\ref{tab:dataset}.
\begin{table}[h!]
   \caption{Statistics of benchmark datasets}
   \label{tab:dataset}
\begin{tabular}{c|c|c c c}
      \toprule 
	  \multicolumn{2}{c|}{\textbf{Datasets}} & \textbf{Entities} & \textbf{Relations} & \textbf{Rel.triplets} \\
      \midrule 
	  \multirow{2}{*}{DBP15K\textsubscript{ZH\_EN}} & Chinese & 66,469 & 2,830 & 153,929 \\
                                                      & English & 98,125 & 2,317 & 237,674 \\
      \midrule 
	  \multirow{2}{*}{DBP15K\textsubscript{JA\_EN}} & Japanese & 65,744 & 2,043 & 164,373 \\
                                                      & English & 95,680 & 2,096 & 233,319 \\
      \midrule 
	  \multirow{2}{*}{DBP15K\textsubscript{FR\_EN}} & French & 66,858 & 1,379 & 192,191 \\
                                                      & English & 105,889 & 2,209 & 278,590 \\
      \midrule 
	  \multirow{2}{*}{SRPRS\textsubscript{EN\_FR}} & English & 15,000 & 221 & 36,508 \\
                                                     & French & 15,000 & 177 & 33,532 \\
      \midrule 
	  \multirow{2}{*}{SRPRS\textsubscript{EN\_DE}} & English & 15,000 & 222 & 38,363 \\
                                                     & German &  15,000 & 120 & 37,377 \\
      \midrule 
	  \multirow{2}{*}{\rv{DBP-YG-15K (OpenEA)}} & \rv{English} & \rv{15,000} & \rv{165} & \rv{30,291} \\
                                            & \rv{English} & \rv{15,000} & \rv{28} & \rv{26,638} \\
      \midrule 
	\multirow{2}{*}{\rv{DBP-YG-15K (RealEA)}} & \rv{English} & \rv{19,865} & \rv{290} & \rv{60,329} \\
                                            & \rv{English} & \rv{21,050} & \rv{32} & \rv{82,109} \\
      \bottomrule 
    \end{tabular}
\end{table}


\noindent\textbf{Cross-Lingual Datasets.} DBP15K~\citep{sun2017cross} is a widely used benchmark dataset for cross-lingual entity alignment~\citep{ding2022conflict, liu2022selfkg, wu2019relation, zhu2021relation2}. It includes three cross-lingual KG pairs extracted from DBpedia, each containing two KGs built upon English and another language (Chinese, Japanese, or French), with 15,000 aligned entity pairs per dataset. SRPRS~\citep{guo2019learning} is a more recent benchmark dataset characterized by sparser connections~\citep{guo2019learning} that are extracted from DBpedia. SRPRS comprises two cross-lingual KG pairs, each with two KGs in English and French/German, and it also includes 15,000 aligned entity pairs.

\vspace{0.2cm}
\noindent\rv{\textbf{Cross-Source Monolingual Datasets.}
DBP-YG-15K is a cross-source monolingual dataset extracted from DBpedia~\citep{auer2007dbpedia} and YAGO 3~\citep{yago}. DBP-YG-15K has two KG pairs, sampled by OpenEA~\citep{sun2020benchmarking} and RealEA~\citep{leone2022critical}, respectively. The OpenEA KG pair is constructed without duplicated entities in each KG, which aligns with our approach. However, the RealEA KG pair does not use this setting and allow duplicated entities in one KG. Both KG pairs of DBP-YG-15K are built upon English and each has 15,000 aligned entity pairs.}

\subsubsection{Experimental Setup}
For fair comparisons, we follow the conventional 30\%-70\% split ratio to randomly partition training and test data on all datasets. 
We use semantic meanings of entity names to construct entity features. On DBP15K with relatively larger linguistic barriers, we first use Google Translate to translate non-English entity names into English, then look up 768-dimensional word embeddings pre-trained by BERT~\citep{devlin2018bert} with English entity names to form entity features. On SRPRS \rv{and DPB-YG-15K}, 
we directly look up word embeddings without translation. As each entity name comprises one or multiple words, we further use TF-IDF to measure the contribution of each word towards entity name representation. Finally, we aggregate TF-IDF-weighted word embeddings for each entity to form its entity feature vector.

The settings of UPL-EA are specified as follows: $K = 125$, $\beta=0.5$, $\gamma=1$, $\lambda=10$, and $M=3$. 
The embedding dimension $d$ is set to 300. For BERT pre-trained word embeddings, we use a PCA-based technique~\citep{raunak2019effective} to reduce feature dimension from 768 to 300 with minimal information loss. The number of pseudo-labeling \rv{iterations is set to 9, where each iteration contains 10 training epochs for the EA model}. We implement our model in PyTorch, using the Adam optimizer with a learning rate of 0.001 on DBP15K and DBP-YG-15K, and 0.00025 on SRPRS. The batch size is set to 256. All experiments are run on a computer with an Intel(R) Core(TM) i9-13900KF CPU @ 3.00GHz and an NVIDIA Geforce RTX 4090 (24GB memory) GPU. 


\subsection{Comparison with State-of-the-Art Baselines}
\rv{To thoroughly validate the effectiveness and applicability of UPL-EA, we compare it with a series of state-of-the-art baselines on both cross-lingual and cross-source monolingual datasets.}

\subsubsection{\rv{Results on Cross-Lingual Datasets}}
\rv{On cross-lingual datasets, language differences could impact the difficulty of aligning entities across different linguistic KGs. Following the survey paper by \cite{zhao2020experimental}, we use the term of linguistic barriers to indicate inherent differences in language uses related to syntactic structures, semantic divergences and cultural nuances encoded in languages~\citep{motschenbacher2022linguistic}. For example, DBP15K\textsubscript{ZH\_EN} (Chinese-English) and DBP15K\textsubscript{JA\_EN} (Japanese-English) are considered as distantly-related languages with larger language barriers. 
whereas DBP15K\textsubscript{FR\_EN} (French-English), SRPRS\textsubscript{EN\_FR} (English-French) and SRPRS\textsubscript{EN\_DE} are considered as closely-related languages with smaller language barriers. } 

\begin{table*}[t]
\scriptsize
\centering
\tabcolsep 3.5pt
   \caption{Performance comparison on DBP15K. \rv{The asterisk (*) indicates that semantic meanings of entity names are used to construct entity features. The best and second best results per column are highlighted in \textbf{bold} and \underline{underlined}, respectively. }}
    \label{tab:comparison_dbp15k}
\begin{tabular}{l|c c c|c c c|c c c}
      \toprule 
          \multirow{2}{*}{Models} & \multicolumn{3}{c|}{DBP15K\textsubscript{ZH\_EN}} & \multicolumn{3}{c|}{DBP15K\textsubscript{JA\_EN}} & \multicolumn{3}{c}{DBP15K\textsubscript{FR\_EN}} \\ 
      \cmidrule(l{0em}r{0em}){2-10} 
        &Hit@1 & Hit@10 & MRR & Hit@1  & Hit@10 & MRR & Hit@1 & Hit@10 & MRR \\
      \midrule 
      MtransE & 20.9 & 51.2 & 0.31 & 25.0 & 57.2 & 0.36 & 24.7 & 57.7 & 0.36 \\
      JAPE-Stru& 37.2 & 68.9 & 0.48 & 32.9 & 63.8 & 0.43 & 29.3 & 61.7 & 0.40 \\ 
      GCN-Stru& 39.8 & 72.0 & 0.51 & 40.0 & 72.9 & 0.51 & 38.9 & 74.9 & 0.51\\ 
      JAPE* & 41.4 & 74.1 & 0.53 & 36.5 & 69.5 & 0.48 & 31.8 & 66.8 & 0.44\\
      GCN-Align* & 43.4 & 76.2 & 0.55 & 42.7 & 76.2 & 0.54 & 41.1 & 77.2 & 0.53\\
      HMAN* & 56.1 & 85.9 & 0.67 & 55.7 & 86.0 & 0.67 & 55.0 & 87.6 & 0.66\\      
      RDGCN* & 69.7 & 84.2 & 0.75 & 76.3 & 89.7 & 0.81 & 87.3 & 95.0 & 0.90\\
      HGCN* & 70.8 & 84.0 & 0.76 & 75.8 & 88.9 & 0.81 & 88.8 & 95.9 & 0.91\\
      CEA* & 78.7 & - & - & 86.3 & - & - & 97.2 & - & -\\
      \midrule 
      IPTransE & 33.2 & 64.5 & 0.43 & 29.0 & 59.5 & 0.39 & 24.5 & 56.8 & 0.35\\
      BootEA & 61.4 & 84.1 & 0.69 & 57.3 & 82.9 & 0.66 & 58.5 & 84.5 & 0.68\\
      MRAEA & 75.7 & 93.0 & 0.83 & 75.8 & 93.4 & 0.83 & 78.0 & 94.8 & 0.85\\
      RNM* & 84.0 & 91.9 & 0.87 & 87.2 & 94.4 & 0.90 & 93.8 & 98.1 & 0.95\\
      CPL-OT* & \underline{92.7} & \underline{96.4} & \underline{0.94} & \underline{95.6} & \underline{98.3} & \underline{0.97} & \underline{99.0} & \underline{99.4} & \underline{0.99}\\ 
     \midrule 
      UPL-EA* & \rvtwo{\textbf{94.7}} & \rvtwo{\textbf{97.5}} & \rvtwo{\textbf{0.96}} & \rvtwo{\textbf{97.4}} & \rvtwo{\textbf{98.9}} & \rvtwo{\textbf{0.98}} & \rvtwo{\textbf{99.4}} & \rvtwo{\textbf{99.7}} & \rvtwo{\textbf{1.00}}\\
      \bottomrule 
    \end{tabular}
\end{table*}

\begin{table*}[t]
\scriptsize
\centering
\tabcolsep 10pt
   \caption{Performance comparison on SRPRS. \rv{The asterisk (*) indicates that semantic meanings of entity names are used to construct entity features. The best and second best results per column are highlighted in \textbf{bold} and \underline{underlined}, respectively. }}
    \label{tab:comparison_srprs}
    \begin{tabular}{l|c c c|c c c}
      \toprule 
          \multirow{2}{*}{Models} & \multicolumn{3}{c|}{SRPRS\textsubscript{EN\_FR}} & \multicolumn{3}{c}{SRPRS\textsubscript{EN\_DE}}\\ 
      \cmidrule(l{0em}r{0em}){2-7} 
        &Hit@1 & Hit@10 & MRR & Hit@1  & Hit@10 & MRR \\
      \midrule 
      MtransE & 21.3 & 44.7 & 0.29 & 10.7 & 24.8 & 0.16\\
      JAPE-Stru & 24.1 & 53.3 & 0.34 & 30.2 & 57.8 & 0.40\\ 
      GCN-Stru & 24.3 & 52.2 & 0.34 & 38.5 & 60.0 & 0.46\\ 
      JAPE* & 24.1 & 54.4 & 0.34 & 26.8 & 54.7 & 0.36\\
      GCN-Align* & 29.6 & 59.2 & 0.40 & 42.8 & 66.2 & 0.51\\
      HMAN*& 40.0 & 70.5 & 0.50 & 52.8 & 77.8 & 0.62\\      
      RDGCN* & 67.2 & 76.7 & 0.71 & 77.9 & 88.6 & 0.82\\
      HGCN* & 67.0 & 77.0 & 0.71 & 76.3 & 86.3 & 0.80\\
      CEA* & 96.2 & - & - & 97.1 & - & - \\
      \midrule 
      IPTransE & 12.4 & 30.1 & 0.18 & 13.5 & 31.6 & 0.20\\
      BootEA & 36.5 & 64.9 & 0.46 & 50.3 & 73.2 & 0.58\\
      MRAEA & 46.0 & 76.8 & 0.56 & 59.4 & 81.5 & 0.66\\
      RNM* & 92.5 & 96.2 & 0.94 & 94.4 & 96.7 & 0.95\\
      CPL-OT* & \underline{97.4} & \underline{98.8} & \underline{0.98} & \underline{97.4} & \underline{98.9} & \underline{0.98}\\ 
     \midrule 
     UPL-EA* & \rvtwo{\textbf{98.2}} & \rvtwo{\textbf{99.3}} & \rvtwo{\textbf{0.99}} & \rvtwo{\textbf{98.4}} & \rvtwo{\textbf{99.5}} & \rvtwo{\textbf{0.99}}\\
      \bottomrule 
    \end{tabular}
\end{table*}

\noindent\textbf{Baselines and Metrics.} We compare UPL-EA against 12 state-of-the-art EA models on \rv{cross-lingual datasets: DBP15K and SRPRS, which broadly fall into two categories:}
\begin{itemize}
\item Supervised models, including MTransE~\citep{chen2016multilingual}, JAPE~\citep{sun2017cross}, JAPE in its structure-only variant denoted as JAPE-Stru, GCN-Align~\citep{wang2018cross}, GCN-Align in its structure-only variant denoted as GCN-Stru, RDGCN~\citep{wu2019relation}, HGCN~\citep{wu2019jointly}, HMAN~\citep{yang2019aligning}, and CEA~\citep{zeng2020collective};

\item Pseudo-labeling-based models, including IPTransE~\citep{zhu2017iterative}, BootEA~\citep{sun2018bootstrapping}, MRAEA~\citep{mao2020mraea},  RNM~\citep{zhu2021relation2}, and CPL-OT~\citep{ding2022conflict}. 
\end{itemize}

The results of MRAEA and CPL-OT on both datasets, and RNM on DBP15K are obtained from their original papers. Results of other baselines are obtained from~\citep{zhao2020experimental}. For UPL-EA, we report the average results over five runs. 

\rv{Following the evaluation protocols of mainstream state-of-the-art EA models, we utilize ranking-based metrics: Hit@k ($\text{k}=1,10$) and Mean Reciprocal Rank (MRR), on cross-lingual datasets.
Given a set of test alignments $\mathcal{S}_{\text{test}}$, Hit@k measures the percentage of correctly aligned entity pairs where the true corresponding counterpart of a source entity appears within the top-k positions in the list of candidate counterparts. 
MRR measures the average of the reciprocal ranks for the correctly aligned entities. Higher Hit@k and MRR scores indicate better EA performance.}

\vspace{0.2cm}
\noindent\textbf{The Results.} Table~\ref{tab:comparison_dbp15k} and Table~\ref{tab:comparison_srprs} report performance comparisons on DBP15K and SRPRS, respectively. This set of results is reported with 30\% seed alignments used for training. The asterisk (*) indicates that  semantic meanings of entity names are used to construct entity features. 

Our results show that UPL-EA significantly outperforms most existing EA models on five cross-lingual KG pairs. In particular, on DBP15K\textsubscript{ZH\_EN}, UPL-EA outperforms the second and third performers, CPL-OT and RNM, by 2\% and over 10\%, respectively, in terms of Hit@1. This affirms the necessity of systematically combating confirmation bias for pseudo-labeling-based entity alignment. It is worth noting that disparities in overall performance can be observed among the five cross-lingual KG pairs, where the lowest accuracy is achieved on  DBP15K\textsubscript{ZH\_EN} due to its large linguistic barriers. Nevertheless, for the most challenging EA task on DBP15K\textsubscript{ZH\_EN}, UPL-EA yields strong performance gains over other baselines.

\subsubsection{\rv{Results on Cross-Source Monolingual Datasets}}
\label{section: monolingual result}

\noindent\rv{\textbf{Baselines and Metrics.} 
On monolingual dataset DBP-YG-15K, we compare UPL-EA against five EA models, including BootEA~\citep{sun2018bootstrapping}, RDGCN~\citep{wu2019relation}, BERT-INT~\citep{tang2021bert}, TransEdge~\citep{sun2019transedge}, and PARIS+~\citep{leone2022critical}. 
The results of these baselines are obtained from~\citep{leone2022critical}. BootEA, TransEdge, and UPL-EA use relational triplets only, the same as our setting. PARIS+, RDGCN, and BERT-INT use both relational and attribute triplets. For UPL-EA, we report the average results over five runs.}

\rv{For more comprehensive evaluation, we adopt classification-based metrics suggested by~\cite{leone2022critical} on DBP-YG-15K, which are precision, recall, and $F_1$ score. Given a set of alignments inferred by an EA model $\mathcal{S}_{\text{pred}}$ and a set of test alignments $\mathcal{S}_{\text{test}}$, the three classification-based metrics are calculated as follows:
\begin{equation}
\text{Precision}=\frac{|\mathcal{S}_{\text{pred}}\cap\mathcal{S}_{\text{test}}|}{|\mathcal{S}_{\text{pred}}|}, \text{Recall} = \frac{|\mathcal{S}_{\text{pred}}\cap\mathcal{S}_{\text{test}}|}{|\mathcal{S}_{\text{test}}|}, F_1 = 2 \times\frac{\text{Precision} \times \text{Recall}}{\text{Precision} + \text{Recall}}.\nonumber
\end{equation}}

\vspace{0.2cm}
\noindent\rv{\textbf{The Results.} 
Table~\ref{tab:comparison_dbyg15k} reports performance comparisons on DBP-YG-15K (OpenEA) and DBP-YG-15K (RealEA) with 30\% seed alignments used for training. \rv{The asterisk ``*" indicates that semantic meanings of entity names are used in EA modeling}. }
\rv{Our results show that UPL-EA outperforms all five baselines across two KG pairs of DBP-YG-15K. On the OpenEA KG pair, UPL-EA achieves nearly perfect performance with all three metrics to be approximately 1, outperforming the second best baseline by more than 2\% on $F_1$ score. On the RealEA KG pair of DBP-YG-15K, \rv{even with duplicated entities in each KG~\citep{leone2022critical}}, UPL-EA performs competitively with an over 5\% improvement on $F_1$ score compared to the second best baseline. }

\begin{table*}[h]
\scriptsize
\tabcolsep 9pt
  \begin{center}
   \caption{\rv{Performance comparison on DBP-YG-15K. The asterisk ``*" indicates that semantic meanings of entity names are used in EA modeling. The best and second best results per column are highlighted in \textbf{bold} and \underline{underlined}, respectively. }}
    \label{tab:comparison_dbyg15k}
    \renewcommand{\arraystretch}{1.1}
    {\begin{tabular}{l|c c c|c c c}
    \toprule 
	\multirow{2}{*}{Models} & \multicolumn{3}{c|}{DBP-YG-15K (OpenEA)} & \multicolumn{3}{c}{DBP-YG-15K (RealEA)}\\ 
     \cmidrule(l{0em}r{0em}){2-7} 
       & Precision & Recall & $F_1$-score & Precision & Recall & $F_1$-score \\
      \midrule 
      TransEdge & 0.367 & 0.212 & 0.268 & 0.335 & 0.203 & 0.253\\
      BootEA & 0.926 & 0.675 & 0.781 & 0.459 & 0.313 & 0.372 \\
      RDGCN* & 0.984 & 0.855 & 0.915 & 0.822 & 0.709 & 0.761 \\
      BERT-INT* & 0.875 & 0.969 & 0.920 & 0.817 & 0.827 & 0.822\\
      PARIS+* & \underline{0.998} & \underline{0.961} & \underline{0.979} & \underline{0.906} & \underline{0.931} & \underline{0.918}\\ 
      \midrule
      UPL-EA* & \rvtwo{\textbf{1.000}} & \rvtwo{\textbf{1.000}} & \rvtwo{\textbf{1.000}} & \rvtwo{\textbf{0.976}} & \rvtwo{\textbf{0.964}} & \rvtwo{\textbf{0.970}}\\ 
      \bottomrule 
    \end{tabular}}
  \end{center}
\end{table*}

\rv{Our reported results on both cross-lingual and cross-source monolingual datasets thus far demonstrate the viability of UPL-EA across various datasets. We will then focus on cross-lingual datasets DBP15K and SRPRS for subsequent experiments and analyses, as cross-lingual contexts present complexities like linguistic barriers, which are crucial for assessing the efficacy of our proposed framework.}


\subsection{Ablation Studies and Analyses}
This section presents a series of ablation studies and in-depth analyses to validate the effectiveness of our proposed UPL-EA framework. 

\subsubsection{Effectiveness of Different Components}
To assess the importance of various components of the proposed UPL-EA framework, we first conduct a thorough ablation study on five cross-lingual KG pairs from DBP15K and SRPRS. To provide deeper insights, we undertake ablation studies under two settings: the conventional setting using 30\% seed alignments and the setting with no seed alignments provided. The full UPL-EA model is compared with its ablated variants, with the best performance highlighted by \textbf{bold}. From Table~\ref{tab:ablation_dbp15k} and Table~\ref{tab:ablation_srprs}, we can see that the full UPL-EA model performs the best in all cases. 


\begin{table*}[h!]
\scriptsize
\tabcolsep 1.5pt
  \begin{center}
   \caption{Ablation study on DBP15K}
    \label{tab:ablation_dbp15k}
    \renewcommand{\arraystretch}{1.03}
\begin{tabular}{l|c c c|c c c|c c c}
      \toprule 
      \multirow{2}{*}{Models} & \multicolumn{3}{c|}{DBP15K\textsubscript{ZH\_EN}} & \multicolumn{3}{c|}{DBP15K\textsubscript{JA\_EN}} & \multicolumn{3}{c}{DBP15K\textsubscript{FR\_EN}} \\ 
      \cmidrule(l{0em}r{0em}){2-10} 
        &Hit@1 & Hit@10 & MRR & Hit@1  & Hit@10 & MRR & Hit@1 & Hit@10 & MRR \\
      \midrule 
       \multicolumn{10}{c}{30\% seed alignments}\\
      \midrule 
      Full Model & \rvtwo{\textbf{94.7}} & \rvtwo{\textbf{97.5}} & \rvtwo{\textbf{0.96}} & \rvtwo{\textbf{97.4}} & \rvtwo{\textbf{98.9}} & \rvtwo{\textbf{0.98}} & \rvtwo{\textbf{99.4}} & \rvtwo{\textbf{99.7}} & \rvtwo{\textbf{1.00}}\\ 
      w.o. OT. Pseudo-Labeling  & 80.1 & 90.4 & 0.84 & 87.6 & 94.8 & 0.90 & 94.4 & 97.4 & 0.96\\
      \rvtwo{w.o. Parallel Ensembling} & \rvtwo{84.9} & \rvtwo{90.6} & \rvtwo{0.87} & \rvtwo{91.1} & \rvtwo{95.5} & \rvtwo{0.93} & \rvtwo{98.1} & \rvtwo{99.0} & \rvtwo{0.98}\\
      w.o. OT. \& Ensembling & 74.8 & 87.3 & 0.80 & 83.4 & 93.6 & 0.87 & 93.1 & 97.5 & 0.95\\ 
      \rv{w.o. Dist. Rectification} & \rv{82.6} & \rv{89.6} & \rv{0.85} & \rv{89.8} & \rv{94.5} & \rv{0.91} & \rv{96.8} & \rv{98.0} & \rv{0.97}\\
      \midrule 
      \multicolumn{10}{c}{No seed alignments}\\
      \midrule 
      Full Model & \rvtwo{\textbf{93.0}} & \rvtwo{\textbf{96.2}} & \rvtwo{\textbf{0.94}} & \rvtwo{\textbf{96.0}} & \rvtwo{\textbf{98.3}} & \rvtwo{\textbf{0.97}} & \rvtwo{\textbf{99.2}} & \rvtwo{\textbf{99.5}} & \rvtwo{\textbf{0.99}} \\ 
      w.o. OT. Pseudo-Labeling & 66.9 & 75.5 & 0.70 & 76.9 & 85.2 & 0.80 & 91.9 & 95.2 & 0.93\\
      \rvtwo{w.o. Parallel Ensembling} & \rvtwo{83.0} & \rvtwo{88.7} & \rvtwo{0.85} & \rvtwo{90.1} & \rvtwo{94.6} & \rvtwo{0.92} & \rvtwo{97.8} & \rvtwo{98.8} & \rvtwo{0.98}\\
      w.o. OT. \& Ensembling & 67.1 & 76.8 & 0.71 & 77.1 & 86.2 & 0.81 & 91.4 & 95.9 & 0.93\\ 
      \rv{w.o. Dist. Rectification} & \rv{72.5} & \rv{79.9} & \rv{0.75} & \rv{82.5} & \rv{89.2} & \rv{0.85} & \rv{95.4} & \rv{97.2} & \rv{0.96}\\
      \bottomrule 
    \end{tabular}
  \end{center}
\end{table*}

\begin{table*}[h!]
\scriptsize
  \begin{center}
   \caption{Ablation study on SRPRS}
    \label{tab:ablation_srprs}
        \renewcommand{\arraystretch}{1.1}
\begin{tabular}{l|c c c|c c c}
      \toprule 
      \multirow{2}{*}{Models} & \multicolumn{3}{c|}{SRPRS\textsubscript{EN\_FR}} & \multicolumn{3}{c}{SRPRS\textsubscript{EN\_DE}}\\ 
      \cmidrule(l{0em}r{0em}){2-7} 
        &Hit@1 & Hit@10 & MRR & Hit@1  & Hit@10 & MRR \\
      \midrule 
       \multicolumn{7}{c}{30\% seed alignments}\\
      \midrule 
      Full Model  & \rvtwo{\textbf{98.2}} & \rvtwo{\textbf{99.3}} & \rvtwo{\textbf{0.99}} & \rvtwo{\textbf{98.4}} & \rvtwo{\textbf{99.5}} & \rvtwo{\textbf{0.99}}\\ 
      w.o. OT. Pseudo-Labeling & 93.9 & 96.6 & 0.95 & 94.2 & 97.5 & 0.95\\
      \rvtwo{w.o. Parallel Ensembling} & \rvtwo{94.6} & \rvtwo{97.4} & \rvtwo{0.96} & \rvtwo{94.8} & \rvtwo{98.1} & \rvtwo{0.96}\\
      w.o. OT. \& Ensembling & 92.7 & 96.5 & 0.94 & 93.6 & 97.4 & 0.95\\ 
      \rv{w.o. Dist. Rectification} & \rv{95.1} & \rv{96.7} & \rv{0.96} & \rv{97.0} & \rv{98.2} & \rv{0.97}\\
      \midrule 
      \multicolumn{7}{c}{No seed alignments}\\
      \midrule 
      Full Model & \rvtwo{\textbf{97.9}} & \rvtwo{\textbf{99.2}} & \rvtwo{\textbf{0.98}} & \rvtwo{\textbf{97.4}} & \rvtwo{\textbf{99.2}} & \rvtwo{\textbf{0.98}}\\ 
      w.o. OT. Pseudo-Labeling & 89.8 & 93.0 & 0.91 & 91.4 & 95.2 & 0.93 \\
      \rvtwo{w.o. Parallel Ensembling} & \rvtwo{94.2} & \rvtwo{97.0} & \rvtwo{0.95} & \rvtwo{94.8} & \rvtwo{97.7} & \rvtwo{0.96}\\
      w.o. OT. \& Ensembling & 89.7 & 93.1 & 0.91 & 91.1 & 94.9 & 0.93\\
      \rv{w.o. Dist. Rectification} & \rv{93.0} & \rv{94.7} & \rv{0.94} & \rv{94.9} & \rv{97.0} & \rv{0.96}\\
      \bottomrule 
    \end{tabular}
  \end{center}
\end{table*}

\begin{itemize} 

\item\textbf{w.o. OT. Pseudo-Labeling:} To study the efficacy of OT-based pseudo-labeling, we ablate it from the full UPL-EA model. As OT modeling can effectively eliminate a considerable number of \rvtwo{conflicted misalignments} to ensure one-to-one correspondences in pseudo-labeled alignments, this ablation results in a profound performance drop across all datasets on both settings.

\item\textbf{\rvtwo{w.o. Parallel Ensembling:}} The ablation of parallel pseudo-label ensembling from UPL-EA also significantly degrades alignment performance, with substantial performance declines observed in both settings, especially on DBP15K\textsubscript{ZH\_EN} and DBP15K\textsubscript{JA\_EN} with large linguistic barriers. \rv{In contrast, performance drops are less pronounced on DBP15K\textsubscript{FR\_EN}, SRPRS\textsubscript{EN\_FR}, and SRPRS\textsubscript{EN\_DE} with relatively small linguistic barriers. This is attributed to the fact that larger linguistic barriers tend to incur more one-to-one misalignments. Our findings confirm that parallel pseudo-label ensembling is crucial for UPL-EA to achieve its full potential, especially when model predictions are less accurate during early training stages.}

\item\textbf{w.o. OT. \& Ensembling:} We also analyze the overall effect of ablating both OT modeling and parallel pseudo-label ensembling from the full model. This ablation, \rv{conceptually identical to the naive pseudo-labeling strategy~\citep{sun2019transedge}}, has a substantial adverse impact, leading to a dramatic performance drop in all cases. Our results highlight the effectiveness of our proposed UPL-EA framework in systematically combating confirmation bias for pseudo-labeling-based entity alignment. 

\item\rv{\textbf{w.o. Dist. Rectification:} The effectiveness of embedding distance rectification is examined by using the original embedding distance defined in Eq.~(\ref{eq:dist}) as the transport cost used for OT modeling. The ablation of distance rectification leads to a significant performance drop at both settings. This highlights the complementary role of distance rectification in the training of the EA model, particularly during the early stages, for learning more informative entity embeddings and providing a reliable cost measure for OT modeling. }
\end{itemize}

Note that under the setting with no seed alignments, the variant without OT-based pseudo-labeling (w.o. OT. Pseudo-Labeling) has similar performance as compared to the variant completely ignoring confirmation bias (w.o. OT. \& Ensemb.). In particular, on DBP15K\textsubscript{ZH\_EN} and DBP15K\textsubscript{JA\_EN}, the former variant even performs slightly worse. This is because under the challenging case where there are no seed alignments, ablating OT-based pseudo-labeling might incur considerably more conflicted misalignments.
As a result, it becomes ineffective to filter out erroneous pseudo-labeled alignments via ensembling.

\subsubsection{\rv{Comparisons with Other Ensembling Methods}}
\label{subsection:ensembling}


To investigate the effectiveness of UPL-EA's parallel pseudo-label ensembling, we carry out a case study on DBP15K using 30\% seed alignments. Specifically, we compare the performance of UPL-EA with three ensembling methods: \rvtwo{(1) Parallel pseudo-label ensembling, (2) Parallel pseudo-label ensembling with majority vote, and (3) Temporal ensembling~\citep{laine2017temporal}. The entity alignment performance of UPL-EA using the three ensembling methods is reported in Table~\ref{tab:different_ensembling}. Our results show that UPL-EA using our proposed parallel ensembling (UPL-EA$_{\text{P.E.}}$) consistently outperforms the variant using majority vote (UPL-EA$_{\text{M.V.}}$) and the variant using temporal ensembling (UPL-EA$_{\text{T.E.}}$). The performance advantage is particularly significant on DBP15K\textsubscript{ZH\_EN} and DBP15K\textsubscript{JA\_EN}, where larger linguistic barriers exist. Specifically, UPL-EA$_{\text{T.E.}}$ performs the worst across all datasets, as self-ensembling approaches impose cross-iteration dependencies, exacerbating error propagation in the context of pseudo-labeling. Our findings suggest that UPL-EA's parallel pseudo-label ensembling provides a simple but effective way to improve the quality of pseudo-labeled alignments, achieving competitive performance compared to other ensembling methods. } 

\begin{table*}[t]
\scriptsize
\tabcolsep 4pt
  \begin{center}
   \caption{\rv{Performance of UPL-EA using different ensembling methods.}}
    \label{tab:different_ensembling}
    \renewcommand{\arraystretch}{1.03}
    {\begin{tabular}{l|c c c|c c c|c c c}
    \toprule 
	\multirow{2}{*}{} & \multicolumn{3}{c|}{DBP15K\textsubscript{ZH\_EN}} & \multicolumn{3}{c|}{DBP15K\textsubscript{JA\_EN}} & \multicolumn{3}{c}{DBP15K\textsubscript{FR\_EN}}\\ 
     \cmidrule(l{0em}r{0em}){2-10} 
       &Hit@1 & Hit@10 & MRR & Hit@1  & Hit@10 & MRR & Hit@1 & Hit@10 & MRR\\
      \midrule 
      UPL-EA$_{\text{P.E.}}$ & \rvtwo{\textbf{94.7}} & \rvtwo{\textbf{97.5}} & \rvtwo{\textbf{0.96}} & \rvtwo{\textbf{97.4}} & \rvtwo{\textbf{98.9}} & \rvtwo{\textbf{0.98}} & \rvtwo{\textbf{99.4}} & \rvtwo{\textbf{99.7}} & \rvtwo{\textbf{1.00}}\\
      UPL-EA$_{\text{M.V.}}$ & \rvtwo{93.3} & \rvtwo{96.6} & \rvtwo{0.95} & \rvtwo{96.0} & \rvtwo{98.2} & \rvtwo{0.97} & \rvtwo{99.0} & \rvtwo{99.4} & \rvtwo{0.99}\\
      UPL-EA$_{\text{T.E.}}$ & 92.8 & 95.3 & 0.94 & 96.0 & 97.6 & 0.97 & 98.7 & 99.0 & 0.99 \\
      \bottomrule 
    \end{tabular} }
  \end{center}
\end{table*}

\subsubsection{\rv{Effectiveness as a General Pseudo-Labeling Framework}}
\label{subsection:framework}


\rv{To further demonstrate UPL-EA's viability as a general pseudo-labeling framework for entity alignment, we substitute the EA model in UPL-EA (described in Section~\ref{section:EAmodel}) with alternative EA models, and examine if applying our UPL strategy could bring any performance improvements. We consider two alternative EA models: (1) GCN-Align~\citep{wang2018cross}, which adopts a two-layer GCN as an encoder to learn entity embeddings, and (2) GAT-Align, where the GCN encoder in GCN-Align is replaced with a two-layer GAT for embedding learning. Both EA models use the same loss function provided in Eq.~(\ref{eq:hard_loss}).  
This analysis is conducted on DBP15K with 30\% seed alignments as a case study. The entity alignment performance using the two baselines and their UPL-EA augmented counterparts is reported in Table~\ref{tab:improve gnn}.}

\begin{table*}[h!]
\scriptsize
\tabcolsep 4pt
  \begin{center}
   \caption{\rv{Performance of UPL-EA instantiated with other EA models}}
    \label{tab:improve gnn}
    \renewcommand{\arraystretch}{1.03}
    {\begin{tabular}{l|c c c|c c c|c c c}
    \toprule 
	\multirow{2}{*}{} & \multicolumn{3}{c|}{DBP15K\textsubscript{ZH\_EN}} & \multicolumn{3}{c|}{DBP15K\textsubscript{JA\_EN}} & \multicolumn{3}{c}{DBP15K\textsubscript{FR\_EN}}\\ 
     \cmidrule(l{0em}r{0em}){2-10} 
       &Hit@1 & Hit@10 & MRR & Hit@1  & Hit@10 & MRR & Hit@1 & Hit@10 & MRR\\
      \midrule 
      GCN-Align & 43.4 & 76.2 & 0.55 & 42.7 & 76.2 & 0.54 & 41.1 & 77.2 & 0.53\\
      GCN$_{\text{UPL-EA}}$ & \rvtwo{79.6} & \rvtwo{91.5} & \rvtwo{0.84} & \rvtwo{82.9} & \rvtwo{94.2} & \rvtwo{0.87} & \rvtwo{87.0} & \rvtwo{96.8} & \rvtwo{0.91}\\
      \midrule 
      GAT-Align & 71.3 & 84.3 & 0.76 & 81.2 & 91.9 & 0.85 & 92.9 & 97.9 & 0.95\\
      GAT$_{\text{UPL-EA}}$ & \rvtwo{92.0} & \rvtwo{96.8} & \rvtwo{0.94} & \rvtwo{93.8} & \rvtwo{98.1} & \rvtwo{0.95} & \rvtwo{98.3} & \rvtwo{99.6} & \rvtwo{0.99}\\
      \midrule 
      UPL-EA & \rvtwo{\textbf{94.7}} & \rvtwo{\textbf{97.5}} & \rvtwo{\textbf{0.96}} & \rvtwo{\textbf{97.4}} & \rvtwo{\textbf{98.9}} & \rvtwo{\textbf{0.98}} & \rvtwo{\textbf{99.4}} & \rvtwo{\textbf{99.7}} & \rvtwo{\textbf{1.00}}\\
      \bottomrule 
    \end{tabular} }
  \end{center}
\end{table*}

\rv{Our results in Table~\ref{tab:improve gnn} indicate that applying UPL-EA to both GCN-Align and GAT-Align improves entity alignment performance by a considerable margin, with an average 20\% improvement in Hit@1. Our results affirm the strong modular utility of UPL-EA as a general pseudo-labeling framework in boosting various EA models to achieve better alignment performance. }

\subsection{Comparison w.r.t. Different Rates of Seed Alignments}

Next, we further \rv{examine how the performance of UPL-EA changes with respect to different rates of seed alignments, decreasing from 40\% to 10\%.} We compare UPL-EA with four representative state-of-the-art baselines (BootEA, RDGCN, RNM, and CPL-OT), and report the results on DBP15K and \rv{SRPRS in Table~\ref{tab:labelrate_dbp15k} and Table~\ref{tab:labelrate_srprs}. The last column ``$\Delta\downarrow$" in each table indicates the average performance loss when decreasing the rate from 40\% to 10\% for each model.} 

\begin{table*}[h!]
\scriptsize
\centering
\tabcolsep 2.8pt
   \caption{Performance comparison (Hit@1) on DBP15K with respect to different rates of seed alignments. \rv{``$\Delta\downarrow$" in the last column indicates the average performance loss when decreasing the rate from 40\% to 10\% on three datasets. }}
    \label{tab:labelrate_dbp15k}
\begin{tabular}{l|c c c c|c c c c|c c c c |c}
      \toprule 
          \multirow{3}{*}{Models} & \multicolumn{4}{c|}{DBP15K\textsubscript{ZH\_EN}} & \multicolumn{4}{c|}{DBP15K\textsubscript{JA\_EN}} & \multicolumn{4}{c|}{DBP15K\textsubscript{FR\_EN}} & \multirow{3}{*}{\rv{$\Delta\downarrow$ }}\\ 
      \cmidrule(l{0em}r{0em}){2-13} 
      \cmidrule(l{0em}r{0em}){2-13} 
        &40\% & 30\% & 20\% & 10\% &40\% & 30\% & 20\% & 10\% & 40\% & 30\% & 20\% & 10\% \\
      \midrule 
      BootEA & 67.9 & 62.9 & 57.3 & 45.7 & 66.0 & 62.2 & 53.5 & 42.9 & 68.6 & 65.3 & 59.8 & 47.3 & \rv{-22.2}\\
      RDGCN & 72.6 & 70.8 & 68.9 & 66.6 & 79.0 & 76.7 & 74.5 & 72.4 & 89.7 & 88.6 & 87.6 & 86.3 & \rv{-5.3}\\
      RNM & 85.4 & 84.0 & 81.7 & 79.3 & 88.8 & 87.2 & 85.9 & 83.4 & 94.5 & 93.8 & 93.0 & 92.3 & \rv{-4.6}\\
      CPL-OT & 93.0 & 92.7 & 92.2 & 91.8 & 96.1 & 95.6 & 95.1 & 94.7 & 99.2 & 99.1 & 98.9 & 98.7 & \rv{-1.0}\\
     \midrule 
     UPL-EA & \rvtwo{\textbf{95.0}} & \rvtwo{\textbf{94.7}} & \rvtwo{\textbf{94.2}} & \rvtwo{\textbf{93.6}} & \rvtwo{\textbf{97.6}} & \rvtwo{\textbf{97.4}} & \rvtwo{\textbf{97.0}} & \rvtwo{\textbf{96.6}} & \rvtwo{\textbf{99.5}} & \rvtwo{\textbf{99.4}} & \rvtwo{\textbf{99.4}} & \rvtwo{\textbf{99.2}} & \rvtwo{-1.0}\\
      \bottomrule 
    \end{tabular}
\end{table*}

\begin{table*}[h!]
\scriptsize
\centering
\tabcolsep 7pt
   \caption{\rv{Performance comparison (Hit@1) on SRPRS with respect to different rates of seed alignments. \rv{``$\Delta\downarrow$" in the last column indicates the average performance loss when decreasing the rate from 40\% to 10\% on two datasets. }}}
    \label{tab:labelrate_srprs}

{\begin{tabular}{l|c c c c|c c c c | c}
      \toprule 
          \multirow{3}{*}{Models} & \multicolumn{4}{c|}{SRPRS\textsubscript{EN\_FR}} & \multicolumn{4}{c|}{SRPRS\textsubscript{EN\_DE}} & \multirow{3}{*}{$\Delta\downarrow$}\\ 
      \cmidrule(l{0em}r{0em}){2-9} 
      \cmidrule(l{0em}r{0em}){2-9} 
        &40\% & 30\% & 20\% & 10\%& 40\% & 30\% & 20\% & 10\% \\
      \midrule 
      BootEA & 39.9 & 36.5 & 31.1 & 18.3 & 53.6 & 50.3 & 43.3 & 32.8 & -20.7\\
      RDGCN & 68.7 & 67.2 & 65.8 & 64.0 & 79.0 & 77.9 & 76.8 & 75.7 & -3.2\\
      RNM & 93.6 & 92.5 & 90.4 & 89.3 & 95.0 & 94.4 & 93.8 & 92.9 & -2.1\\
      CPL-OT & 97.6 & 97.4 & 97.3 & 97.1 & 97.6 & 97.4 & 97.2 & 97.0 & -0.5\\ 
     \midrule 
     UPL-EA & \rvtwo{\textbf{98.2}} & \rvtwo{\textbf{98.2}} & \rvtwo{\textbf{98.0}} & \rvtwo{\textbf{97.9}} & \rvtwo{\textbf{98.4}} & \rvtwo{\textbf{98.4}} & \rvtwo{\textbf{97.7}} & \rvtwo{\textbf{97.4}} & \rvtwo{-0.7}\\
      \bottomrule 
    \end{tabular}%
    }
\end{table*}

As expected, UPL-EA consistently outperforms four competitors on all cross-lingual KG pairs at all seed alignment rates. This is due to UPL-EA's ability to augment the training set with reliable pseudo-labeled alignments by effectively alleviating confirmation bias. As the rate of seed alignments decreases from 40\% to 10\%, the performance of BootEA significantly degrades by over 20\% on average due to its limited ability to prevent the accumulation of pseudo-labeling errors. RNM outperforms RDGCN owing to its posterior embedding distance editing during pseudo-labeling; however, its lack of iterative model re-training hinders its overall performance. CPL-OT demonstrates more stable performance with varying rates of seed alignments because it selects pseudo-labeled alignments via the conflict-aware OT modeling and then uses them to train the EA model in turn; nevertheless, its neglect of one-to-one misalignments limits the potential of CPL-OT. \rv{UPL-EA remains consistently competitive and stable across all datasets, with an average performance loss of 1\% at most when the rate of seed alignments decreases from 40\% to 10\%, even on the most challenging DBP15K\textsubscript{ZH\_EN} dataset.}




\subsection{Impact of Pre-Trained Word Embeddings}
To analyze the impact of using different pre-trained word embeddings, we report the results of UPL-EA that form entity features with Glove embedding~\citep{pennington2014glove}, which is widely used in the existing EA models. 
We conduct this analysis on the setting with 30\% seed alignments. The results on DBP15K are reported in Table~\ref{tab:different word embs} as a case study. We can observe that UPL-EA with Glove embedding still achieves competitive results, significantly outperforming all other baselines. This confirms that the efficacy of UPL-EA is not highly dependent on embedding initialization methods used. When switching from Glove embedding to BERT pre-trained embeddings, performance gains can be observed, especially on DBP15K\textsubscript{JA\_EN}. This indicates the usefulness of pre-trained word embeddings of high quality for entity alignment.

\begin{table*}[h!]
\scriptsize
\tabcolsep 5pt
  \begin{center}
   \caption{Impact of pre-trained word embeddings}
    \label{tab:different word embs}
    \renewcommand{\arraystretch}{1.03}
\begin{tabular}{l|c c c|c c c|c c c}
    \toprule 
	\multirow{2}{*}{Models} & \multicolumn{3}{c|}{DBP15K\textsubscript{ZH\_EN}} & \multicolumn{3}{c|}{DBP15K\textsubscript{JA\_EN}} & \multicolumn{3}{c}{DBP15K\textsubscript{FR\_EN}}\\ 
     \cmidrule(l{0em}r{0em}){2-10} 
       &Hit@1 & Hit@10 & MRR & Hit@1  & Hit@10 & MRR & Hit@1 & Hit@10 & MRR\\
      \midrule 
      Glove & \rvtwo{93.9} & \rvtwo{97.5} & \rvtwo{0.95} & \rvtwo{96.2} & \rvtwo{98.9} & \rvtwo{0.97} & \rvtwo{99.0} & \rvtwo{99.7} & \rvtwo{0.99}\\
      BERT & \rvtwo{94.7} & \rvtwo{97.5} & \rvtwo{0.96} & \rvtwo{97.4} & \rvtwo{98.9} & \rvtwo{0.98} & \rvtwo{99.4} & \rvtwo{99.7} & \rvtwo{1.00}\\
      \bottomrule 
    \end{tabular}
  \end{center}
\end{table*}

\subsection{Hyper-Parameter Sensitivity Analysis}

We further study the sensitivity of UPL-EA with regards to four hyper-parameters: embedding dimension $d$, number of \rvtwo{OT-based models $M$ for pseudo-label ensembling}, regularization hyper-parameter $\beta$ in Eq.~(\ref{eq:sinkhorn_ot_problem}), and margin hyper-parameter $\gamma$ in the alignment loss function Eq.~(\ref{eq:hard_loss}). This set of sensitivity analysis is conducted on DBP15K\textsubscript{ZH\_EN} with 30\% seed alignments as a case study. The respective results in terms of Hit@1 and Hit@10 are reported in Fig.~\ref{fig:sensitivity}.

\begin{figure*}[h!]
     \centering
     \begin{subfigure}[b]{0.45\textwidth}
         \centering
         \includegraphics[width=\textwidth]{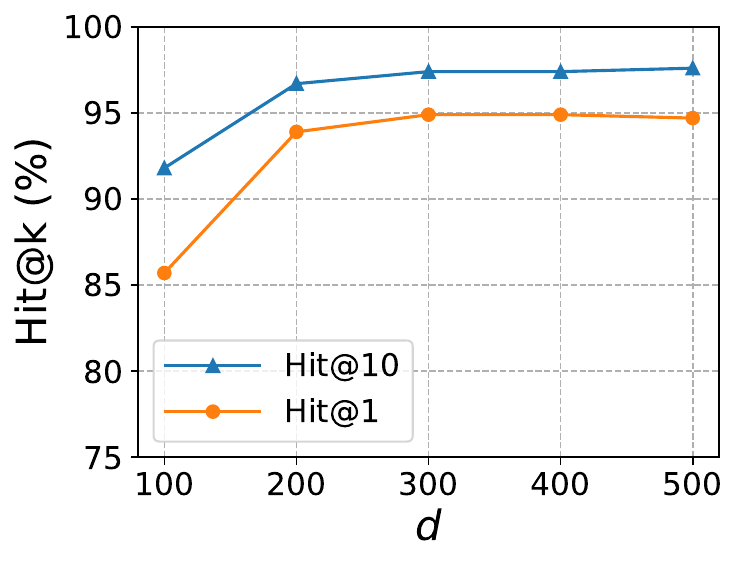}
         \caption{Embedding dimension $d$}
         \label{fig:dimension}
     \end{subfigure}
     \hspace{0.8cm}
         \begin{subfigure}[b]{0.45\textwidth}
         \centering
         \includegraphics[width=\textwidth]{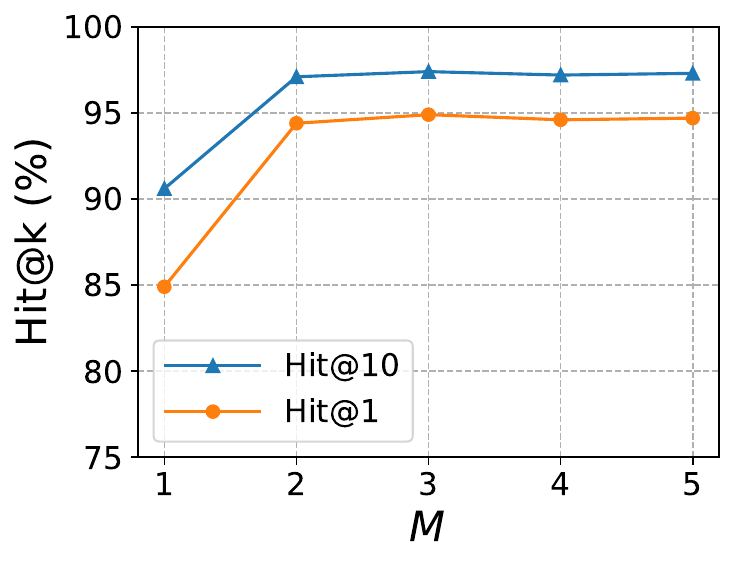}
         \caption{Number of \rvtwo{models} for \rv{ensembling} $M$}
         \label{fig:m}
     \end{subfigure}\\
      \begin{subfigure}[b]{0.45\textwidth}
         \centering
         \includegraphics[width=\textwidth]{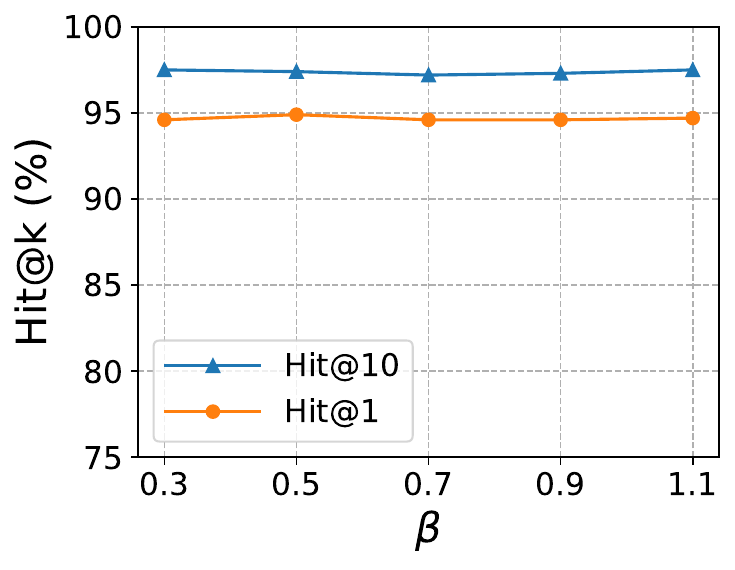}
         \caption{Regularization hyper-parameter $\beta$}
         \label{fig:beta}
     \end{subfigure}
     \hspace{0.8cm}
     \begin{subfigure}[b]{0.45\textwidth}
         \centering
         \includegraphics[width=\textwidth]{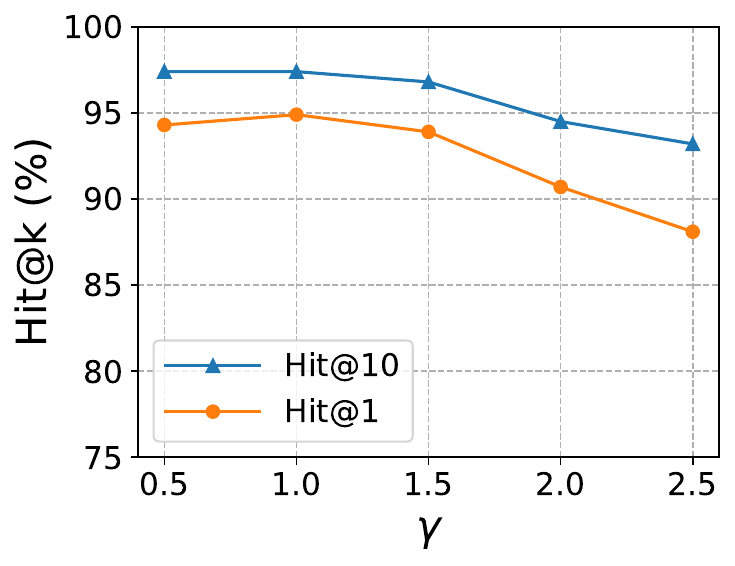}
         \caption{Margin hyper-parameter $\gamma$}
         \label{fig:gamma}
     \end{subfigure}
        \caption{Hyper-parameter sensitivity analysis on DBP15K\textsubscript{ZH\_EN}}
        \label{fig:sensitivity}
\end{figure*}

As shown in Fig.~\ref{fig:dimension}, the performance of UPL-EA improves considerably as the embedding dimension $d$ increases from 100 to 300 and then retains a relatively stable level. 
\rv{For the number of \rvtwo{OT-based models} $M$ used for parallel pseudo-label ensembling, the use of ensembling over multiple \rvtwo{OT-based models} ($M>1$) significantly improves the alignment performance over a single one ($M=1$). This demonstrates the effectiveness of our parallel ensembling mechanism, which requires only a few \rvtwo{OT-based models} (e.g., $M=3$)} to achieve competitive performance (see Fig.~\ref{fig:m}). In addition, Fig.~\ref{fig:beta} shows that the performance of UPL-EA is insensitive to different values of $\beta$ used in OT-based pseudo-labeling. As for the margin parameter $\gamma$, the performance of UPL-EA begins to drop gradually when $\gamma$ exceeds 1, as shown in Fig.~\ref{fig:gamma}. This is reasonable, as a larger margin would allow more tolerance for alignment errors, thereby degrading model performance.


\subsection{\rv{Runtime Comparison}}

Lastly, we compare the overall training time of UPL-EA with \rvtwo{three} embedding-based EA models, including CPL-OT, RDGCN, and RNM, and \rvtwo{one conventional EA model, PARIS+}, across five cross-lingual datasets with 30\% seed alignments. For a fair comparison, we use the same parameters reported in the original papers of the \rvtwo{four} baselines. Fig.~\ref{fig:Time efficiency comparison} reports the overall training time of UPL-EA and the other baselines. Our results show that UPL-EA is considerably more efficient than the three embedding-based baselines, achieving a speedup of at least 50\% across all five datasets. \rvtwo{Although PARIS+ exhibits the shortest runtime due to its rule-based nature and lack of gradient-based optimization, UPL-EA remains highly efficient while maintaining strong EA performance.}

\rv{Notably, CPL-OT and RNM take more than twice as long as UPL-EA, and three times as long on larger datasets such as DBP15K\textsubscript{FR\_EN}. Additionally, the supervised model RDGCN requires over 60-minute training time on DBP15K and almost 30 minutes on SRPRS, indicating its poor runtime efficiency. Overall, our findings suggest that UPL-EA exhibits superior runtime efficiency compared to strong \rvtwo{embedding-based} baselines\rvtwo{, while maintaining a well-balanced trade-off between EA performance and time efficiency compared to conventional EA methods.}}

\begin{figure*}[h!]
\centering
    \includegraphics[width=0.95\textwidth]{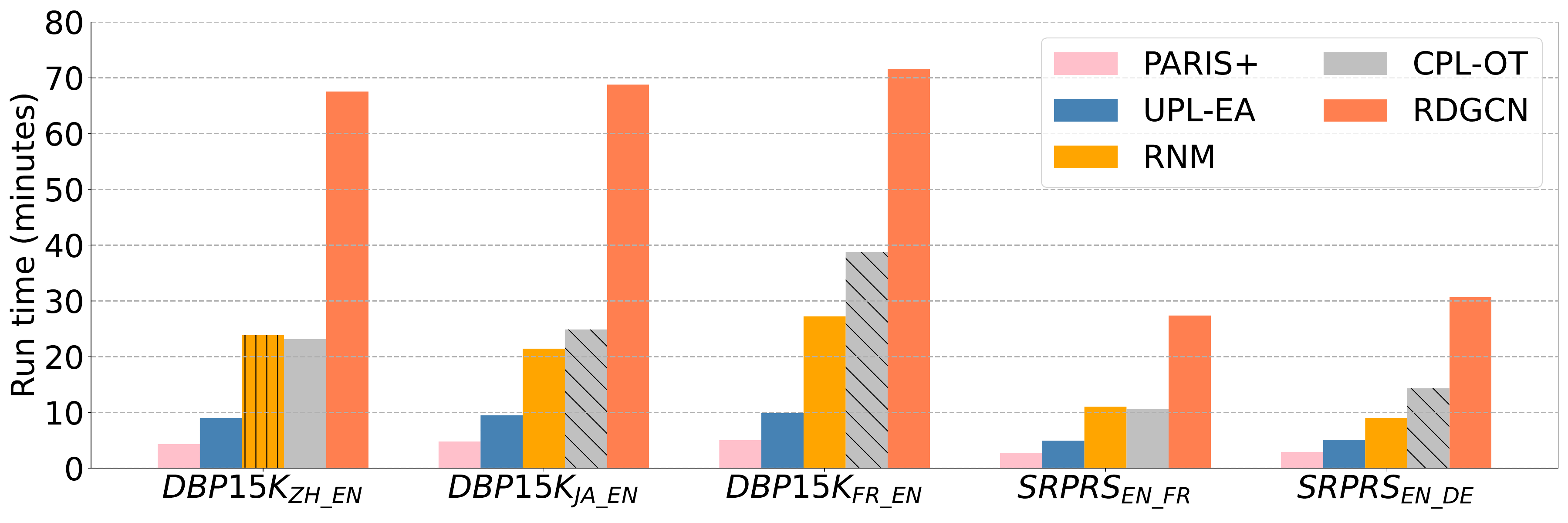}
    \caption{\rvtwo{Runtime comparison}}
    \label{fig:Time efficiency comparison}
\end{figure*}

\section{Related Works}
\label{section:related work}

In this section, we review three streams of related literature, including entity alignment in knowledge graphs, pseudo-labeling in semi-supervised learning, and optimal transport on graphs.

\subsection{Entity Alignment in Knowledge Graphs}

\rv{The entity alignment (EA) task aims to discover one-to-one equivalent entity pairs across two KGs that refer to the same real-world identity. Early EA models are probabilistic methods that compute similarities and perform equivalence reasoning in the input space. 
\rvtwo{For example, PARIS~\citep{suchanek2011paris} is an unsupervised ontology alignment model that jointly aligns entities, relations, and classes across KGs. It converts each relation into a logic-rule based function, and iteratively refines entity alignment probabilities using the functional and inverse functional properties of relations. 
}
PARIS is later extended to a semi-supervised EA model, PARIS+~\citep{leone2022critical}, allowing the incorporation of seed alignments.}

\rv{Since 2017, most EA models are embedding-based}, using distances between entity embeddings in latent spaces to measure the semantic correspondences between entities. Inspired by TransE~\citep{bordes2013translating}, MTransE~\citep{chen2016multilingual} embeds two KGs into two respective embedding spaces, where a transformation matrix is learned using seed alignments. \rv{To obtain better KG embeddings, TransEdge~\citep{sun2019transedge} enhances the translational scoring function by replacing the relation embedding with an edge embedding that incorporates information from both the head and tail entities, in addition to the relation information.} To reduce the number of parameters involved, most subsequent models~\citep{sun2017cross,sun2018bootstrapping,zhu2017iterative} embed KGs into a common latent space by imposing the embeddings of pre-aligned entities to be as close as possible. This ensures that alignment similarities between entities can be directly measured via their embeddings. 

\rv{More recent EA models leverage GNNs to incorporate KG structural information for entity alignment.} For example, GCN-Align~\citep{wang2018cross} adopts GCNs to learn better entity embeddings for alignment inference. However, GCNs and their variants are inclined to result in alignment conflicts, \rv{as their feature aggregation scheme incurs an over-smoothing issue~\citep{min2020scattering, jiang2022sparse}: The embeddings of entities among local neighborhood become indistinguishable similar as the number of GCN layers increases.}
To mitigate the over-smoothing effect, more recent works~\citep{wu2019jointly,wu2019relation,zhu2021relation2} adopt a highway strategy~\citep{srivastava2015highway} on GCN layers, which ``mixes" the learned entity embeddings with the original features. Another line of research efforts is devoted to improving GCN-based approaches through considering heterogeneous relations in KGs.
HGCN~\citep{wu2019jointly} jointly learns the embeddings of entities and relations, without considering the directions of relations. RDGCN~\citep{wu2019relation} performs embedding learning on a dual relation graph, but fails to incorporate statistical information of neighboring relations of an entity. RNM~\citep{zhu2021relation2} uses iterative relational neighborhood matching to refine finalized entity embedding distances. This matching mechanism proves to be empirically effective, but it is used only after the completion of model training and fails to reinforce embedding learning in turn. \rv{BERT-INT~\citep{tang2021bert} leverages the BERT (Bidirectional Encoder Representations from Transformers) model to capture both entity and contextual information from relational paths between entities, thereby enhancing entity alignment performance by incorporating rich semantics such as entity descriptions. Yet, obtaining powerful description information in practice can be challenging in many real-world scenarios.} All the aforementioned models, however, require an abundance of seed alignments provided for training purposes, which are labor-intensive and costly to acquire in real-world KGs. 

To tackle the shortage of seed alignments, semi-supervised EA models have been proposed in recent years. As a prominent learning paradigm among such, pseudo-labeling-based methods, e.g., BootEA~\citep{sun2018bootstrapping}, IPTransE~\citep{zhu2017iterative}, \rv{TransEdge~\citep{sun2019transedge}}, RNM~\citep{zhu2021relation2}, MRAEA~\citep{mao2020mraea}, and CPL-OT~\citep{ding2022conflict}, propose to iteratively pseudo-label unaligned entity pairs and add them to seed alignments for subsequent model training. For RNM, there is a slight difference that it augments seed alignments to rectify embedding distance after the completion of model training. Although these methods have achieved promising performance gains, the confirmation bias associated with iterative pseudo-labeling has been largely under-explored. 
Recent methods like RNM~\citep{zhu2021relation2} and MRAEA~\citep{mao2020mraea} use simple heuristics to preserve only the most convincing alignment pairs, for example, those with the smallest distance, at the presence of conflicts. BootEA~\citep{sun2018bootstrapping} and CPL-OT~\citep{ding2022conflict}, on the other hand, model the inference of pseudo-labeled alignments as an assignment problem, where the most likely aligned pairs are selected at each pseudo-labeling iteration. Unlike BootEA that selects a small set of pseudo-labeled alignments using a pre-specified threshold, CPL-OT imposes a full match between two unaligned entity sets to maximize the number of pseudo-labeled alignments at each iteration. Both methods impose constraints to enforce hard alignments \rv{to alleviate alignment conflicts, which may potentially increase one-to-one misalignments.} 

\rv{This work is thus proposed to explicitly address confirmation bias in pseudo-labeling-based entity alignment. We analytically identify two types of pseudo-labeling errors that lead to confirmation bias and propose a new UPL-EA framework to alleviate these errors. Different from our previous work CPL-OT~\citep{ding2022conflict}, UPL-EA introduces a discrete OT formulation aimed at addressing \rvtwo{conflicted misalignments}. This formulation allows for a more accurate, probabilistic alignment configuration optimized efficiently using the Sinkhorn algorithm. Unlike CPL-OT, which relies on a pre-specified threshold, the threshold for selecting pseudo-labeled alignments in UPL-EA is mathematically derived and proven empirically effective, facilitating its applicability across various datasets. In addition, a parallel ensembling approach is further proposed to refine pseudo-labeled alignments by combining predictions over multiple \rvtwo{OT-based models trained in parallel}, thus mitigating \rvtwo{one-to-one misalignments}.}

\subsection{Pseudo-Labeling}
Pseudo-labeling has emerged as an effective semi-supervised approach in addressing the challenge of label scarcity. It refers to a self-training paradigm where the model is iteratively bootstrapped with additional labeled data based on its own predictions. The pseudo-labels generated from model predictions can be defined as hard (one-hot distribution) or soft (continuous distribution) labels~\citep{lee2013pseudo,shi2018transductive,arazo2020pseudo}. More specifically, pseudo-labeling strategies are designed to select high-confidence unlabeled data by either directly taking the model's predictions, or sharpening the predicted probability distribution. It is closely related to entropy regularization~\citep{sajjadi2016mutual}, where the model's predictions are encouraged to have low entropy (i.e., high-confidence) on unlabeled data. The selected pseudo-labels are then used to augment the training set and to fine-tune the model initially trained on the given labels. This training regime is also extended to an explicit teacher-student configuration~\citep{pham2021meta}, where a teacher network generates pseudo-labels from unlabeled data, which are used to train a student network.

Despite its promising results, pseudo-labeling is inevitably susceptible to erroneous pseudo-labels, thus suffering from confirmation bias~\citep{arazo2020pseudo,rizve2021in}, where the prediction errors would accumulate and degrade model performance. The confirmation bias has been recently studied in the field of computer vision. In works like~\citep{arazo2020pseudo,rizve2021in}, confirmation bias is considered as a problem of poor network calibration, where the network is overfitted towards erroneous pseudo-labels. To alleviate confirmation bias, pseudo-labeling approaches have adopted strategies such as mixup augmentation~\citep{arazo2020pseudo} and uncertainty weighting~\citep{rizve2021in}. Subsequent works like~\citep{cascante2021curriculum,zhang2021flexmatch} address confirmation bias by applying curriculum learning principles, where the decision threshold is adaptively adjusted during the training process and model parameters are re-initialized after each iteration. 

Recently, pseudo-labeling has also been studied on graphs for the task of semi-supervised node classification~\citep{li2018deeper,Sun2020MultiStageSL,Li2023informative}. \citet{li2018deeper} propose a self-trained GCN that enlarges the training set by assigning a pseudo-label to high-confidence unlabeled nodes, and then re-trains the model using both genuine labels and pseudo-labels. The pseudo-labels are generated via a random walk model in a co-training manner. \citet{Sun2020MultiStageSL} show that a shallow GCN is ineffective in propagating label information under few-label settings, and employ a multi-stage self-training approach that relies on a deep clustering model to assign pseudo-labels. \citet{Li2023informative} propose to incorporate the node informativeness scores for the selection of pseudo-labels and adopt distinct loss functions for genuine labels and pseudo-labels during model training. Despite these research efforts, the problem of confirmation bias remains under-explored in graph domains. This work systematically analyzes the cause of confirmation bias and proposes a principled approach to conquer confirmation bias for pseudo-labeling-based entity alignment across KGs.


\subsection{Optimal Transport on Graphs}

Optimal Transport (OT) is the general problem of finding an optimal plan to move one distribution of mass to another with the minimal cost~\citep{villani2009optimal}. As an effective metric to define the distance between probability distributions, OT has been applied in computer vision and natural language processing over a range of tasks including machine translation, text summarization, and image captioning~\citep{torres2021survey,chen2020graph}. In recent years, OT has also been studied on graphs to match graphs with similar structures or align nodes/entities across graphs. For graph partitioning and matching, the transport on the edges across graphs is used to define the Gromov-Wasserstein (GW) discrepancy~\citep{titouan2019optimal} that measures how edges in a graph compare to those in another graph~\citep{xu2019gromov,petric2019got,Xu2019scalable}. 
For entity alignment across graphs, \citet{pei2019improving} incorporate an OT objective into the overall loss to enhance the learning of entity embeddings. \citet{tang2023robust} propose to jointly perform structure learning and OT alignment through minimizing multi-view GW distance matrices between two attributed graphs. These methods have primarily used OT to define a learning objective, which involves bi-level optimization for model training. To further enhance the scalability of OT modeling for entity alignment, \citet{mao2022lightea} propose to make the similarity matrix sparse by dropping its entries close to zero. However, this sparse OT modeling potentially violates the constraints of the OT objective, failing to guarantee one-to-one correspondences across two KGs. In our work, we focus on tackling the scarcity of seed alignments via iterative pseudo-labeling; we seek to find more accurate one-to-one alignment configurations between entities via OT modeling, thus eliminating conflicted misalignments at each pseudo-labeling iteration and mitigating confirmation bias. 


\section{Conclusion and Future Work}
\label{section:conclusion}

\rv{We have investigated the problem of confirmation bias for pseudo-labeling-based entity alignment, which has been largely overlooked in the literature. Through an in-depth analysis, we have revealed the underlying causes of confirmation bias and proposed UPL-EA, a novel unified pseudo-labeling framework for entity alignment. UPL-EA systematically addresses confirmation bias through two key innovations: OT-based pseudo-labeling and \rvtwo{parallel pseudo-label ensembling}. OT-based pseudo-labeling utilizes a discrete OT formulation to more accurately infer pseudo-labeled alignments that satisfy one-to-one correspondences}, thus mitigating \rvtwo{conflicted misalignments}. \rvtwo{Parallel pseudo-label ensembling combines the predictions of pseudo-labeled alignments from multiple \rvtwo{OT-based models independently trained in parallel} to reduce variability in pseudo-label selection}, thus alleviating the propagation of \rvtwo{one-to-one misalignments} into subsequent model training. \rv{Our extensive experimental evaluation and analysis demonstrate that UPL-EA outperforms state-of-the-art baselines across various types of benchmark datasets}. The competitive performance of UPL-EA validates its superiority in addressing confirmation bias and \rv{its utility as a general pseudo-labeling framework to improve entity alignment performance. Future research will include a theoretical investigation to rigorously assess the effectiveness of pseudo-labeling ensembling within UPL-EA. Additionally, we will explore extending our OT formulation to incorporate different sources of information for multi-modal entity alignment.}

\section*{Compliance with Ethical Standards}
The authors have no conflict of interests or competing interests to declare that are relevant to the content of this article.









\bibliography{ea-bibs}

\end{document}